\newcommand{\gray}[1]{\textcolor{gray}{#1}}
\def\x{{\mathbf x}}
\def\e{{\mathbf e}}
\def\f{{\mathbf f}}
\def\h{{\mathbf h}}
\def\R{{\mathbb R}}
\def\t{{\mathbf t}}
\def\s{{\mathbf s}}
\def\F{{\mathcal F}}
\def\1{{\mathbf{1}}}
\def\C{{\mathcal C}}
\newcommand{\ExternalLink}{%
    \tikz[x=1.2ex, y=1.2ex, baseline=-0.2ex]{%
        \begin{scope}[x=1ex, y=1ex]
            \clip (-0.1,-0.1) 
                --++ (-0, 1.2) 
                --++ (0.6, 0) 
                --++ (0, -0.6) 
                --++ (0.6, 0) 
                --++ (0, -1);
            \path[draw, 
                line width = 0.5, 
                rounded corners=0.5] 
                (0,0) rectangle (1,1);
        \end{scope}
        \path[draw, line width = 0.5] (0.5, 0.5) 
            -- (1, 1);
        \path[draw, line width = 0.5] (0.6, 1) 
            -- (1, 1) -- (1, 0.6);
        }
    }
\theoremstyle{plain}
\newtheorem{theorem}{Theorem}[section]
\newtheorem{proposition}[theorem]{Proposition}
\theoremstyle{definition}
\theoremstyle{remark}
\icmltitlerunning{FAENet}
\begin{document}

\twocolumn[
\icmltitle{FAENet: Frame Averaging Equivariant GNN for Materials Modeling}



\icmlsetsymbol{equal}{*}

\begin{icmlauthorlist}
\icmlauthor{Alexandre Duval}{equal,cs,mila}
\icmlauthor{Victor Schmidt}{equal,mila}
\icmlauthor{Alex Hernandez Garcia}{mila}
\icmlauthor{Santiago Miret}{intel}
\icmlauthor{Fragkiskos D. Malliaros}{cs}
\icmlauthor{Yoshua Bengio}{mila,udem}
\icmlauthor{David Rolnick}{mila,mg}
\end{icmlauthorlist}

\icmlaffiliation{mila}{Mila -- Quebec AI Institute}
\icmlaffiliation{mg}{McGill Unversity}
\icmlaffiliation{intel}{Intel Labs}
\icmlaffiliation{udem}{Université de Montréal}
\icmlaffiliation{cs}{Université Paris-Saclay, CentraleSupélec, Inria}

\icmlcorrespondingauthor{Alexandre Duval, Victor Schmidt}{\texttt{\{alexandre.duval,schmidtv\}@mila.quebec}}

\icmlkeywords{equivariant graph neural network, invariant graph neural network, molecular graph, materials science, open catalyst dataset, frame averaging}

\vskip 0.3in
]



\printAffiliationsAndNotice{\icmlEqualContribution} 

\begin{abstract} 
    Applications of machine learning techniques for materials modeling typically involve functions known to be equivariant or invariant to specific symmetries. While graph neural networks (GNNs) have proven successful in such tasks, they enforce symmetries via the model architecture, which often reduces their expressivity, scalability and comprehensibility. 
    In this paper, we introduce (1) a flexible framework relying on stochastic frame-averaging (SFA) to make any model E(3)-equivariant or invariant through data transformations.
    (2) FAENet: a simple, fast and expressive GNN, optimized for SFA, that processes geometric information without any symmetry-preserving design constraints. We prove the validity of our method theoretically and empirically demonstrate its superior accuracy and computational scalability in materials modeling on the OC20 dataset (S2EF, IS2RE) as well as common molecular modeling tasks (QM9, QM7-X). A package implementation is 
    available at 
    \url{https://faenet.readthedocs.io}.
\end{abstract}

\section{Introduction}
Machine Learning (ML) methods have the ability to model complex physical and chemical interactions. It thus holds great potential for accelerating material design, which is essential to various applications such as low-carbon energy, sustainable agriculture or drug discovery. One particularly promising use case of ML is modeling the properties of complex materials systems at lower computational cost compared to expensive quantum mechanical simulation techniques like Density Functional Theory (DFT).
The heavy reliance on DFT for materials property prediction continues to impose a significant computational barrier to evaluating large number of material candidates \citep{chen2022universal}. Graph Neural Networks (GNNs) based on geometric deep learning principles have shown promise in their ability to predict a wide range of molecular properties \citep{han2022geometrically}. A key factor of the success of GNNs is their ability to leverage 3D geometric information via the representation of a collection of atoms in 3D space \citep{atz2021geometric}, which is updated based on spatial atomic interactions by passing messages between them. Another important aspect is the incorporation of geometric priors that exploit the symmetry of the data, rendering model predictions invariant or equivariant\footnote{In this work, unless specified otherwise, we consider invariance to be a special case of equivariance and will include invariance in claims regarding equivariance.} to Euclidean transformations\footnote{Rotations, reflections, and translations, which in 3D space define the group $\text{E}(3)$.}, as well as key physics principles such as the conservation of energy \citep{smidt2021euclidean}.

Symmetries and physical constraints are typically enforced directly into the model architecture, which greatly restricts the flexibility of GNNs to process geometric information \citep{gasteiger2021gemnet, fuchs2020se, satorras2021n}. As a result, these models either lack expressivity or present significantly more complex and computationally expensive architectures, as detailed in Section~\ref{sec:RW}. While state-of-the-art GNNs remain orders of magnitude faster than DFT, their inference time still limits the use of ML for downstream practically-relevant applications, which require large-scale evaluations \citep{agrawal2016fourthparadigm}. Indeed, whether we are trying to discover new drugs, new catalysts or undiscovered material systems, we need to explore exponentially vast search spaces of potential candidates \citep{bohacek1996molecularmodeling}. The above ambitions to accelerate automated material discoveries therefore require designing expressive, robust and computationally scalable models. 

To that end, we propose a novel view of 3D molecular and solid-state materials modeling, where symmetries are preserved via data projections instead of architectural constraints. Concretely, we make the following contributions: 
\begin{itemize}
    \item \textit{Symmetry-Preserving Data Augmentation via Stochastic Frame-Averaging:} We propose a flexible framework to develop equivariant GNNs for materials modeling without any architectural requirement. Building upon Frame-Averaging (FA) \citep{puny2022frame}, we propose to project data points into a canonical representation, allowing any model to be theoretically (\textit{Full} FA) or empirically (\textit{Stochastic} FA) $\text{E}(3)$-equivariant without losing expressiveness.
    %
    \item \textit{FAENet:} We introduce the Frame Averaging Equivariant Network (FAENet), a lightweight yet effective GNN whose design is not constrained by symmetry-preserving requirements. FAENet can therefore process geometric information through atom relative positions with full flexibility while rigorously preserving symmetries through the data, as leveraged by FA. 
    \item \textit{FAENet Analysis:} We verify the theoretical properties of the proposed approach, investigate its expressive power, and demonstrate its superior accuracy vs scalability trade-off compared to prior methods on four well known datasets in ML for materials science: OC20 IS2RE, S2EF (2M) for solid-state crystal structure modeling, QM7-X and QM9 for molecular modeling. 
\end{itemize}

\label{sec:intro}

\section{Related Work}

Recent works have expanded the application of ML techniques to a broad set of materials modeling tasks ranging from solid-state \citep{zitnick2020introduction, miret2022open} to molecular \citep{hoja2021qm7, ramakrishnan2014quantum} structures. 
Most existing GNN architectures have applied physics-informed 3D symmetries directly in the model architecture, making model predictions explicitly invariant or equivariant to the desired transformations\footnote{A function $f$ is invariant w.r.t. transformation $t$ if, for any input $x$, $f(t(x)) = f(x)$. On the other hand, an equivariant function is such that $f(t(x)) = t(f(x))$. Typical examples include energy and forces, respectively}.

Various GNNs are constructed to be \textbf{\text{E}(3)-invariant} by extracting invariant geometric features from atomic positions~\citep{unke2019physnet, klicpera2020directional, liu2021spherical, shuaibi2021rotation, ying2021transformers, adams2021learning}. SchNet \citep{schutt2017schnet}, leverages atom distances via a continuous convolution filter to learn a potential energy surface, making it fast but unable to distinguish between certain types of molecules. ComENet \citep{wang2022comenet} and GemNet \citep{gasteiger2021gemnet} extract additional information via bond angles and torsion angles(between quadruplets of nodes), thereby enabling the architecture to distinguish a larger set of atomic systems. Nevertheless, these methods are computationally expensive, as they use 3-hop neighbourhoods to compute torsion information for each update step at both training and inference time. Furthermore, they are usually less expressive than equivariant representations \cite{wang2022graph, joshi2022expressive}. 

\textbf{Equivariant} methods \citep{thomas2018tensor, anderson2019cormorant, fuchs2020se, brandstetter2021geometric, batatia2022mace, frank2022so3krates} focus on enforcing equivariance by using irreducible representations of the SO(3) group. These works usually combine node features (learned based on atom relative positions) with a continuous equivariant filter (constructed based on spherical harmonics and learnable radial function) via a Clebsh-Gordan tensor product to guarantee $\text{E}(3)$- or $\text{SE}(3)$-equivariant predictions. While these methods are expressive and generalize well, they can be hard to implement and very computationally expensive for training and inference.

Additionally, \citet{schutt2021equivariant, batzner20223, satorras2021n, tholke2022torchmd} modeled equivariant interactions in Cartesian space using both scalar and vector representations. These GNNs achieve good performance and are relatively fast by avoiding expensive operations. However, the authors manually design two separate sets of functions to deal with each type of representation and often use complex operations to mix their information, which renders the global architecture hard to understand. Overall, such models lack formal guarantees.

\textbf{Data augmentation} (DA) is an alternative way of incorporating the desired data symmetries without constraining the model architecture. It has been found extremely effective in computer vision. In the graph domain, \citet{hu2021forcenet} augmented molecular datasets with rotated and reflected input graphs. 
Despite showing promising results, the accuracy vs. scalability gains are not significant enough to constitute a true Pareto optimal improvement. Potential explanations include the suboptimal utilisation of geometric information given the absence of design constraints, which translates for instance into an overly complex model architecture, as well as soft and partial symmetry-preservation. 
In the 3D image domain explored by \citet{gerken2022equivariance}, equivariant DA also resulted in a mitigated performance-scalability trade-off. 
\citet{gerken2022equivariance} showed as well that DA methods matched invariant networks in accuracy for invariant tasks on 3D images, with much smaller computational cost. Despite promising results, this area remains underexplored for 3D materials property prediction tasks.


\label{sec:RW}

\section{Symmetry-Preserving Data Augmentation via Stochastic Frame Averaging}

The first part of our modeling framework builds upon the idea of frame averaging, where we map the input data to a canonical plane using Principal Component Analysis (PCA). This mapping offers a unique representation of all Euclidean transformations of the data and enables to rigorously preserve symmetries without constraining the GNN design. Frame-averaging also preserves the expressive power of the backbone architecture, ultimately leading to expressive architectures at lower inference cost. 

\subsection{Background}

Frame averaging (FA) is a framework introduced by \citet{puny2022frame} by which a function (such as a neural network) $\Phi: V \rightarrow W$ between normed vector spaces can be transformed into one that is equivariant (or invariant) with respect to a class of symmetries $G$. Suppose that $\rho_1$ and $\rho_2$ are representations of the group $G$ over $V$ and $W$, respectively. That is, for each element $g\in G$, $\rho_1(g)$ and $\rho_2(g)$ are the transformations that $g$ induces in $V$ and $W$.

A \emph{frame} is defined as a function $ \F(X):V \rightarrow 2^G$, where the following properties are relevant:
\begin{itemize}
    \item $\F$ is \emph{$G$-equivariant} if $\forall X \in V, g \in G$, $\mathcal{F}(\rho_1(g)X)=g \mathcal{F}(X)$, where $g\mathcal{F}(X)=\{ gh |h \in \mathcal{F}(X)\}$. 
    \item $\F$ is \emph{bounded} over a domain $K \subset V$ if $\exists c > 0$ such that $\forall g \in \F(X), X\in K$, $||\rho_2(g)||_{op} \leq c$, where $|| \cdot ||_{op}$ is the operator norm over $W$.
\end{itemize}
 \citet{puny2022frame} prove that if a frame is $G$-equivariant, then any arbitrary map $\Phi$ can be made equivariant (or invariant) by averaging predictions over that frame, noted $\langle\Phi\rangle_{\mathcal{F}}$:
\begin{equation}
    \langle\Phi\rangle_{\mathcal{F}}(X) = \frac{1}{|\mathcal{F}(X)|} \sum_{g \in \mathcal{F}(X)} \rho_2(g) \Phi(\rho_1(g)^{-1}X)).
    \label{eq:equiv-fa-def}
\end{equation}
The authors also show that if $\F$ is bounded, FA-based GNNs $\Phi_{\F}$ are $G$-equivariant models which preserve the expressive power of the backbone architecture, leading to maximally expressive equivariant models for learning on graphs. 

Note that these results extend those of the group averaging operator \citep{chen2021equivariant, yarotsky2022universal}, which takes the sum in Eq.~\eqref{eq:equiv-fa-def} over all group elements. FA thus offers an efficient way to achieve the same theoretical properties. 

In this work, we address the generic problem of materials property prediction where said properties are invariant/equivariant to Euclidean motions. As such, we can use FA to make our GNN $\Phi$ symmetry-preserving with respect to the group $G=\text{E}(3)$.
Note that we formalise this task as a graph/node regression problem, where we predict properties (e.g., forces $\mathbf{y} \in \mathbb{R}^{n \times 3}$ or energy $y \in \mathbb{R}$) from initial atomic configurations $(X, Z)$, where $X \in \R^{n \times 3}$ is the matrix of 3D atom positions and $Z \in \mathbb{N}^{n}$ their atomic numbers.


\subsection{Frame Construction}
\label{subsec:frame-construction}

The goal of frame construction is to find a frame uniquely determined by the configuration of atoms, such that the projected coordinates remain identical for arbitrary E(3) transformations. Additionally, the frame should be robust to slight distortions, such that similar objects are mapped to similar frames and representations. 
In order to define such a frame for our materials modeling case, we apply Principal Component Analysis (PCA) and set the principal components of the 3D atomic structure $X$ as frame axes.

Concretely, we define $\mathcal{F}(X)$ using our PCA procedure as follows: First, we compute the centroid $\mathbf{t} = \frac{1}{n} X^\top \mathbf{1} \in \R^3$ and the covariance matrix $\Sigma = (X - \mathbf{1} \mathbf{t}^\top)^\top (X - \mathbf{1} \mathbf{t}^\top)$ of atom positions $X$. Then, we solve 
$\Sigma \mathbf{u}=\lambda \mathbf{u}$ to find the eigenvectors $\mathbf{u}$ of $\Sigma$. Under the assumption of distinct eigenvalues $\lambda_1 > \lambda_2 > \lambda_3$, we use the 3$\times$3 orthogonal matrices $U = [\pm \mathbf{u}_1, \pm \mathbf{u}_2, \pm \mathbf{u}_3]$ to define the frame 
\begin{equation}
    \mathcal{F}(X) = \{(U, \mathbf{t}) | U = [\pm \mathbf{u}_1, \pm \mathbf{u}_2, \pm \mathbf{u}_3 ]\} \subset \text{E}(3).
    \label{eq:frames}
\end{equation} 
$|\mathcal{F}(X)|=2^3=8$ for 3D atomic systems. 
We apply the symmetries $g = (U,\mathbf{t}) \in \mathcal{F}(X)$ to the data by defining the following group representations: 
\begin{align}
    \label{eq:initial-group-rep}
    \rho_1(g)X&=XU^\top + \mathbf{1}\mathbf{t}^\top \\
    \rho_2(g)X&=\begin{cases} XU^\top \quad \textrm{ for equivariant predictions} \\ 
 X\qquad\;\, \textrm{ for invariant predictions}\end{cases} \notag
\end{align}


To compute model inferences, we use the frame $\mathcal{F}(X)$ defined in~Eq.~(\ref{eq:frames}) and group representations of Eq.~(\ref{eq:initial-group-rep}) to average predictions over the frame, as in~Eq.~(\ref{eq:equiv-fa-def}).

To extend this framework to periodic crystal structures, we also consider the case where the structure is not defined by just $X$ but the extended set $D=(X, Z, C, O)$, where $C \in \mathbb{R}^{3 \times 3}$ refers to the unit cell Cartesian coordinates (with directional vectors as columns) and $O \in \{-1,0,1\}^{n \times n \times 3}$ to the cell offsets between any pair of atoms given periodic boundary conditions. With this new formulation, we keep the same frame ($\F(D)=\F(X)$) but we additionally need to apply symmetries on $C$, similarly to $X$, while $O$ and $Z$ remain unchanged by any $\text{E}(3)$ transformation. For a more detailed justification, see Appendix~\ref{app:sec:SFA}.
This yields
\begin{align}
     \rho_1(g)^{-1}(D)&= \big((X - \mathbf{1}\mathbf{t}^\top)U, Z, (C - \mathbf{1}\mathbf{t}^\top)U ,O\big) \notag \\
    \rho_2(g) (D) &= \big(XU^\top, Z, CU^\top, O\big), 
    \label{eq:rho-formulations}
\end{align} 
for the equivariant case, while $\rho_2(g)$ remains the identity function for invariant predictions. 
We insert these functions into Eq.~(\ref{eq:equiv-fa-def}) to derive final model predictions. 
Note that $\rho_1^{-1}$ is computed directly since it is the quantity of interest. Since $U$ is an orthogonal matrix, we have $U^{-1}=U^\top$.

\begin{proposition}
The frame $\F$ defined in Eq.~(\ref{eq:frames}) along with transformations $\rho_1$, $\rho_2$ defined in Eq.~(\ref{eq:rho-formulations}) is bounded and G-equivariant. 
Hence, any arbitrary map $\Phi$ becomes $\text{E}(3)$-invariant/equivariant using Eq.~(\ref{eq:equiv-fa-def}) while preserving its expressive power. 
\end{proposition}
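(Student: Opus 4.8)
The plan is to reduce everything to the master theorem of \citet{puny2022frame}: once we check that the PCA frame $\F$ of Eq.~(\ref{eq:frames}) is $G$-equivariant and bounded (with $G=\text{E}(3)$), the claimed $\text{E}(3)$-invariance/equivariance of $\langle\Phi\rangle_{\F}$ and the preservation of expressive power follow immediately from their results quoted in the Background. So there are really only two things to prove.

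\textbf{Equivariance of the frame.} I would write a generic group element as $g=(R,\mathbf{s})$ with $R\in\text{O}(3)$ and translation $\mathbf{s}\in\R^3$, so that $\rho_1(g)X=XR^\top+\1\mathbf{s}^\top$, and first record how the PCA ingredients transform. A one-line computation gives that the centroid of $\rho_1(g)X$ is $R\mathbf{t}+\mathbf{s}$, that the centered matrix becomes $(X-\1\mathbf{t}^\top)R^\top$, and therefore the covariance transforms by conjugation, $\Sigma\mapsto R\Sigma R^\top$ --- crucially using only $R^\top R=I$, so reflections are handled on the same footing as rotations. Hence $\Sigma\mathbf{u}_i=\lambda_i\mathbf{u}_i$ implies $(R\Sigma R^\top)(R\mathbf{u}_i)=\lambda_i(R\mathbf{u}_i)$: the eigenvalues, and in particular the strict ordering $\lambda_1>\lambda_2>\lambda_3$, are preserved, and each principal direction maps to $R\mathbf{u}_i$ up to sign. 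Because $\F$ bundles all $2^3$ sign choices into one set, this shows $\F(\rho_1(g)X)=\{\,(R[\pm\mathbf{u}_1,\pm\mathbf{u}_2,\pm\mathbf{u}_3],\,R\mathbf{t}+\mathbf{s})\,\}$. Comparing with the left action, and using that $\text{E}(3)$ composition in the $(U,\mathbf{t})$ parametrisation reads $g\cdot(U,\mathbf{t})=(RU,R\mathbf{t}+\mathbf{s})$, we obtain exactly $\F(\rho_1(g)X)=g\,\F(X)$. For the periodic extension $D=(X,Z,C,O)$, the cell matrix $C$ is acted on by $\rho_1$ in the same way as $X$ while $Z$ and $O$ are untouched by any $\text{E}(3)$ motion, so $\F(D)=\F(X)$ is unchanged and the same identity holds verbatim.

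\textbf{Boundedness.} Here $W=\R^{n\times3}$ for equivariant targets (e.g.\ forces) and $W=\R$ for invariant targets (e.g.\ energy). In the first case $\rho_2(g)$ is right-multiplication by the orthogonal matrix $U^\top$, and in the second $\rho_2(g)=\mathrm{Id}$. Since right-multiplication by an orthogonal matrix is an isometry in the Frobenius norm, $\|\rho_2(g)\|_{op}\le 1$ for every $g\in\F(X)$ and every $X$, so $\F$ is bounded with the uniform constant $c=1$. Feeding the two verified properties into the Frame-Averaging theorem of \citet{puny2022frame} then yields the Proposition: $\langle\Phi\rangle_{\F}$ is $\text{E}(3)$-equivariant (resp.\ invariant) and inherits the full expressive power of $\Phi$.

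I expect the equivariance step to be the only place requiring care: one must check that the frame is a genuine (set-valued) function of the configuration in spite of the eigenvector sign ambiguity, that the distinct-eigenvalue hypothesis is stable under the group action (so the construction does not silently break for transformed inputs), that $\text{O}(3)$ --- not merely $\text{SO}(3)$ --- is covered, and that in the crystal setting the auxiliary tensors $C,O,Z$ transform in a way consistent with keeping $\F(D)=\F(X)$. The boundedness step and the final invocation of the Frame-Averaging theorem are then routine.
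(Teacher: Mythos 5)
Your proof is correct and follows essentially the same route as the paper's: both verify frame equivariance by tracking how the centroid and covariance transform under a Euclidean motion (so that the eigenvectors rotate and the sign ambiguity is absorbed by the $2^3$-element frame), establish boundedness from orthogonality of $U$, and then invoke the frame-averaging results of \citet{puny2022frame} for the equivariance and expressive-power conclusions. The only cosmetic differences are that you treat a general element $(R,\mathbf{s})\in\text{E}(3)$ in a single computation where the paper splits into translations and pure rotations, and you make the reflection case, the eigenvector sign ambiguity, and the fact that $\rho_2$ never involves translations explicit where the paper leaves them implicit.
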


\begin{proof}
See Appendix~\ref{app:sec:SFA}.
\end{proof}

\subsection{Stochastic Frame Averaging}
\label{subsec:SFA}

The above frame averaging method, extended to crystals periodic structures and denoted \textit{Full FA} in the rest of the paper, involves the average of GNN predictions $\langle\Phi\rangle(D)$ over $|\F(X)|=8$ frames. Given our goal to enable scalable GNN training/inference, we propose an efficient approximation to \textit{Full FA} called \textit{Stochastic Frame Averaging (SFA)}.  

In SFA, we randomly sample a single symmetry $g_*=(U_*, \mathbf{t})$ uniformly from $\F$ for each data point at every forward pass. Final model predictions thus become:
\begin{align}
    \label{eq:FA_approx}
    \Phi_{\mathcal{F}}^{\text{\textit{invar}}}(D)&= \Phi \left(((X - \mathbf{1}\mathbf{t}^\top)U_*, Z, (C - \mathbf{1}\mathbf{t}^\top)U_* ,O) \right) \nonumber \\
    \Phi_{\mathcal{F}}^{\text{\textit{equiv}}}(D) &= \Phi_{\mathcal{F}}^{\text{\textit{invar}}}(D) \cdot U_*^\top,
\end{align}
for the invariant $\Phi_{\mathcal{F}}^{\text{\textit{invar}}}(D)$ and equivariant cases $\Phi_{\mathcal{F}}^{\text{\textit{equiv}}}(D)$.

While sampling only a single frame component does not theoretically guarantee exact invariance or equivariance, we empirically demonstrate in \cref{app:sec:empirical-eval} that our stochastic approximation is almost perfectly invariant and equivariant with up to 8$\times$ faster compute time compared to \textit{Full Frame Averaging}. Our approach can be viewed as a variant of data augmentation where we let the model learn symmetries by stochastically sampling among a few possible projections as opposed to all possible rotations and reflections of the data. This leads to a better preservation of data symmetries in addition to a higher performance with similar compute time when compared to conventional DA (see \cref{app:sec:empirical-eval}). 
We hence bridge the gap between data augmentation approaches, which present unleveraged benefits (e.g. no model design constraints), and hard-engineered equivariant methods (with design constraints). 

Additionally, this setup imbues our framework with valuable flexibility. SFA can (1) be incorporated into the training pipeline of any GNN; (2) enforce data symmetries approximately or exactly by using all 8 frames; (3) consider $\text{SE}(3)$ (not equivariant to reflections) instead of $\text{E}(3)$ by restricting $\mathcal{F}(X)$ to orthonormal positive orientation matrices, generally with $|\mathcal{F}(X)|=4$ elements; (4) be adapted to any dataset. For instance, the z-axis is fixed in OC20 \cite{chanussot2021open} (i.e. the catalyst is always below the adsorbate). Hence, we could consider only 2D rotation/reflection around the z-axis, which yields a total of $2$ frames for \text{SE}$(3)$ equivariance, where each has dimension $2\times2$.


\label{sec:FA}

\section{FAENet}

\begin{figure*}[t]
    \centering
    \includegraphics[width=\textwidth]{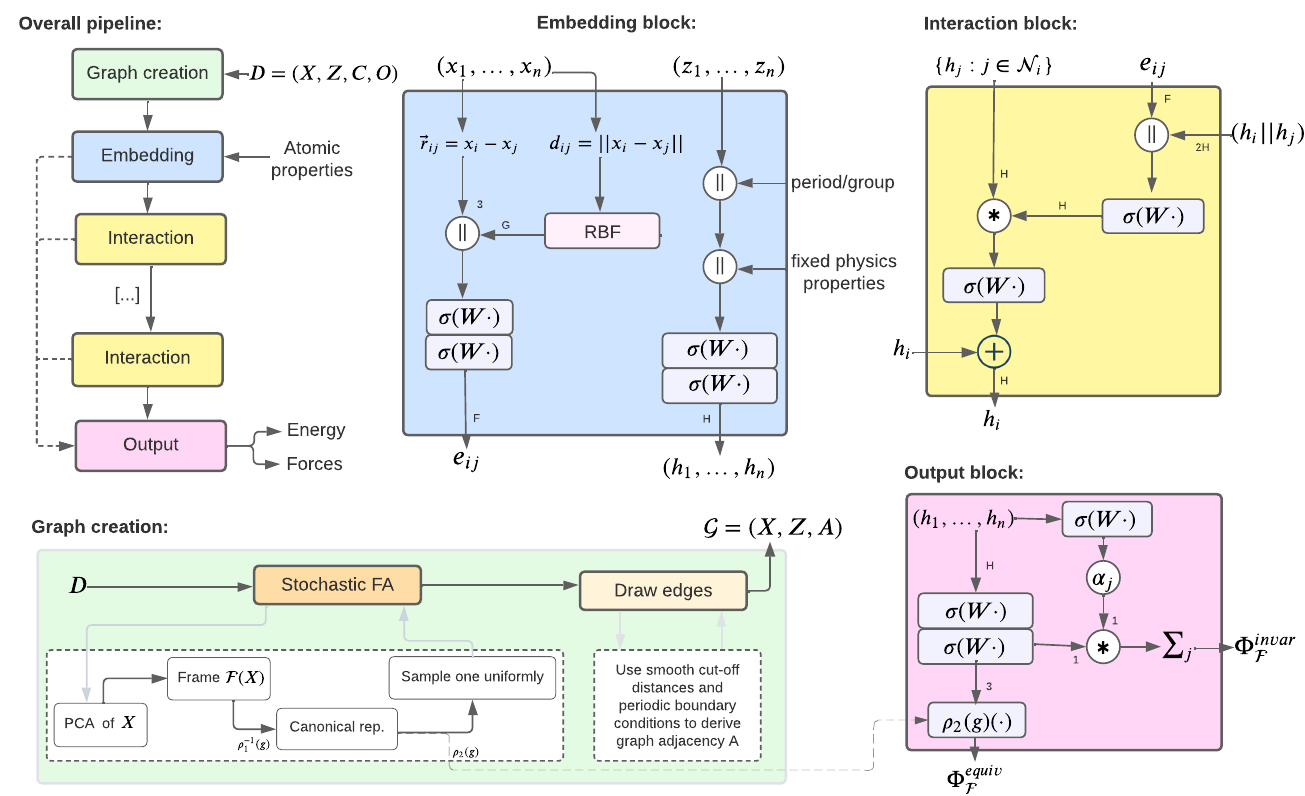}
    \caption{Overview of FAENet architecture making use of the Stochastic Frame Averaging framework. FAENet takes an input sample $D$ and makes an invariant $\Phi^{\text{\textit{invar}}}_{\F}$ (e.g. energy) and/or equivariant prediction $\Phi^{\text{\textit{equiv}}}_{\F}$ (e.g. atomic forces) by passing sequentially through a graph creation step, an embedding block, several interaction layers and an output block. $\ast$ denotes the continuous convolution operation described in Eq.~(\ref{eq:MP-eq}), $||$ denotes concatenation, figures/letters the embedding dimension and $\sigma(W \cdot)$ a 1-layer MLP with the swish activation function. Note that while the graph creation step is performed only once, we still need to sample a different canonical representation from $\mathfrak{C}$ at each epoch. For Full FA, we apply FAENet on each element of $\mathfrak{C}$ and average after the output block (not shown).}
    \label{fig:gnn-pipeline}
\end{figure*}

In this section, we describe our proposed GNN architecture, FAENet, detailed in Fig.~\ref{fig:gnn-pipeline}, which is specifically designed to take advantage of the frame averaging framework introduced in \Cref{sec:FA}. FAENet leverages 3D geometric information directly through atom relative positions without imposing any architectural constraints. FAENet inherits the theoretical guarantees of FA and satisfies relevant data symmetries. Overall, this yields greater modeling flexibility and the computational scalability needed to perform large-scale material property evaluations.

\textbf{Graph Creation}: Using $\F(D)$ from Eq.~(\ref{eq:frames}), we first apply the transformation $\rho_1$ defined in Eq.~(\ref{eq:rho-formulations}) to our input data $D$ in order to map it onto a canonical representation $\mathfrak{C}= \{\rho_1(g)^{-1}(D)| g \in \F(D)\}$ with cardinality greater than one. This data projection is computed as a pre-processing step, and has a negligible impact on training time given that PCA is calculated on a $3\times 3$ covariance matrix. During model training, following the SFA approach, we select one canonical element at random and create the graph (i.e. the adjacency matrix) using the cutoff distance $c$ based on projected positions:
\begin{align}
    A_{ij} = 
    \begin{cases}
        1 \text{ if } d_{ij} < c \\
        0 \text{ otherwise}
    \end{cases}
\end{align}
where $d_{ij} = || (\mathbf{x}_i - \mathbf{x}_j) + O_{ij*} \cdot C||$.  

\textbf{Embedding Block}: Building upon the work of~\citet{duval2022phast}, we leverage additional domain information to initialize atom representations, denoted $\h_i^{(0)} \in \R^{h}$ for atom $i$.
We define $\h_i^{(0)}$ as a concatenation of several lookup embeddings based on atomic number, group, and period, as well as a fixed set of physical properties (see Appendix~\ref{app:subsec:atom-prop}) that were found to be relevant in previous works \citep{takigawa2016machine, ward2017including}. We pass it to two-layer MLP.

To capture the 3D topology of the atomic system, we also learn an edge embedding $\mathbf{e}_{ij}$ between each pair of connected atoms $(i,j)$. Since our model architecture is free of symmetry preserving constraints, we can process geometric information in a very simple and efficient manner using a 2-layer MLP on atom relative positions $\vec{r}_{ij}$. We concatenate radial basis functions (RBF) of distance information $d_{ij}$ to $\vec{r}_{ij}$ as this quantity is very relevant and shown useful in previous works:
\begin{equation}
    \mathbf{e}_{ij} = \sigma\bigg(\text{MLP}\big(\vec{r}_{ij} || \text{RBF}(d_{ij})\big)\bigg).
    \label{eq:edge-embedding}
\end{equation}

Note that FAENet takes as input both atom relative positions and atomic number, which suffices to uniquely identify any 3D material graph\footnote{This stands in contrast to various models \citep{schutt2017schnet, klicpera2020directional, liu2021spherical} -- which are provably unable to distinguish between certain molecules (e.g.~enantiomers, CH4).}. Moreover, by processing this information using universal approximators \cite{hornik1989multilayer}, FAENet's embedding block has maximal expressive power. 

\textbf{Message Passing}: In each interaction block, we propagate messages from neighbouring atoms $j \in \mathcal{N}_i$ to the center atom $i$ using a simple continuous convolution of $\h_j^{(l)}$ (the atom embeddings at layer $l$) with 3D geometric information $\e_{ij}$. Specifically, we learn a more precise graph convolution filter $\f_{ij}^{(l)} = \sigma (\text{MLP}(\e_{ij}||\h_i^{(l)}||\h_j^{(l)}))$, with an activation function  $\sigma$, to weigh the propagated message using both geometric information and atom endpoints: 
\begin{equation}
    \mathbf{h}_i^{(l+1)}= \mathbf{h}_i^{(l)} + \text{MLP} \left( \sum_{j \in \mathcal{N}_i} \mathbf{h}_j^{(l)} \odot \f_{ij}^{(l)} \right).
    \label{eq:MP-eq}
\end{equation}

In addition to being efficient, our message passing scheme is accessible and easy-to-understand compared to many approaches that enforce symmetries architecturally. Our GNN simply weighs each message received from neighbouring nodes using a convolution filter learned via an MLP of unique 3D geometric information (i.e. relative atom positions, edge endpoints' representations). 
This novel functioning grants FAENet with great discriminative capabilities, as demonstrated empirically in \cref{app:subsec:expressivity}, where we conduct an analysis of GNNs' expressive power as proposed by \cite{joshi2022expressive}. In particular, we show that FAENet is able to distinguish between the proposed complex molecular graphs examples with perfect accuracy, better than any other method. This highlights FAENet's high expressivity and showcases the advantage of leveraging geometric information without any design restrictions. We also corroborate that SFA preserves the expressive power of the backbone architecture and does not hamper FAENet's representation capability. Please refer to \cref{app:subsec:expressivity} to see the expressivity analysis in full. 

\textbf{Architectural Details}: We use GraphNorm \cite{cai2021graphnorm}, an efficient batch normalisation method, after each message passing layer to increase our network's robustness and mitigate vanishing/exploding gradient issues. For similar purposes we add skip-connections in the message passing layer, which also help manage the over-smoothing problem \citep{chen2020measuring}. On top of that, we add jumping connections between each interaction block and the output block, using concatenation, to further boost discriminative power. Even though node representations get more refined and global as the number of layers increases, using such structural information helps the network generalize better \citep{xu2018powerful}. Finally, we chose the swish activation function because of its smooth and non-monotonic behaviour. A thorough ablation study of the model architecture is provided in \cref{app:subsec:ablation-study}.

\textbf{Output Block}: We obtain the final atom predictions ($y_i$ or $\vec{y}_i$) by passing the derived atom representations to two dense layers that map the embeddings to the correct dimension. For graph-level predictions, we perform a weighted average of atom-level predictions, where the weight is learned based on final atom embeddings: $\hat{y} = \sum_{i=1, \ldots, n} \alpha(\h^{(L)}_i) \cdot y_{i}$, where $\alpha(\cdot)$ denote the learnable importance weights. 

Note that this is the output $\Phi(D)^{\text{\textit{invar}}}$. For equivariant predictions, like forces, we need to multiply the result by $U^\top$ as indicated in Eq.~(\ref{eq:FA_approx}). Since we use an efficient approximation that samples uniformly at random one frame at each epoch among all possible ones, we directly obtain $\Phi_{\F}$.

For applications where we predict both energy and forces, unlike most previous works \cite{schutt2017schnet}, we do not compute atomic forces as the energy's gradient with respect to atom positions $F_i = \frac{\partial E}{\partial \x_i}$ (i.e. its definition in physics). The main reason for this is decreased scalability. \citet{kolluru2022open} showed that this process incurs a large computational overhead, increasing memory usage by a factor of 2-4 in addition to decreasing modeling performance in some datasets. Instead, we use two independent output heads: $\Phi^1$ for graph-level energy predictions and $\Phi^2$ for atom-level force predictions. The ability to model energy-conserving forces remains attractive in several applications such as molecular dynamics, where they are important for the stability of the simulation and improve the ability to reach new local minima \citep{chmiela2017machine}. As a result, we decide to strengthen energy-conservation by fine-tuning a new loss term: the $L_2$ norm of atomic force predictions and energy gradient with respect to atom positions:
\begin{equation}
    \mathcal{L}_{EC} = \sum_i ||\Phi^2_{\F}(D) - \nabla \Phi^1_{\F}(D)||_2.
    \label{eq:energy-conserving}
\end{equation}

\label{sec:GNN}

\section{Experiments}

\subsection{Empirical Evaluation of Model Properties}

In this section, we empirically verify the correctness of our symmetry preserving framework, including its different theoretical properties\footnote{Code is available \href{https://github.com/RolnickLab/ocp/tree/icml}{ on Github \ExternalLink}}. To do so, we consider a set of symmetry preservation metrics (formally defined in Appendix~\ref{app:sec:empirical-eval}) and evaluate them on the IS2RE and S2EF taks of OC20 \citep[see Section~\ref{sec:oc20};][]{zitnick2020introduction} for the following FAENet variants (1) SFA, (2) Full FA, (3) conventional data augmentation (DA) using randomly sampled rotations and reflections of the graph, (4) No symmetry-preservation method (No-FA), (5) No training nor FA/DA (No-Train), (6) $G=\text{SE}(3)$ instead of $\text{E}(3)$. We also compare with the invariant model SchNet and the data augmentation model ForceNet. Based on the full results provided in Appendix~\ref{app:sec:empirical-eval}, we draw the following conclusions:

\begin{itemize}
    \item \textit{Full FA} as defined in \Cref{sec:FA} yields invariant/equivariant predictions, proving the correctness of our framework and implementation.
    \item \textit{SFA} outperforms \textit{DA} and \textit{No-FA} both in terms of MAE (for comparable compute time) and of learned symmetries. In addition to providing a great approximation of invariance/equivariance, it yields slightly better MAE than \textit{Full FA} while being much faster to run, proving the overall relevance of our approach. 
    \item Not using any kind of symmetry-preserving techniques (No-FA) does not lead to invariant/equivariant predictions even if it clearly learns implicitly to preserve symmetries to some extent (vs No-Train). Besides, No-FA has a ``significant'' negative effect on performance compared to SFA, suggesting that enforcing symmetries in some way is desirable for OC20; although not critical  (possibly due to its 2D nature given fixed z-axis).
\end{itemize}

Overall, these results support the correctness of our approach and the benefits of using Stochastic FA as a symmetry preserving data augmentation method.

\subsection{Model Evaluation} \label{sec:evaluation}

In this section, we evaluate both the performance and scalability of our proposed approach, FAENet with stochastic frame averaging, on four common benchmark datasets: OC20 IS2RE, S2EF, QM9 and QM7-X. Full experimental settings are provided in the Appendix \ref{app:sec:extended-results}. 

\textbf{Metrics}: Our evaluation metrics focus on understanding the importance of the performance-scalability trade-off of different methods as motivated in Sec.~\ref{sec:intro}. Unless specified otherwise, the main modeling performance metric for each task is the  mean absolute error (MAE). Scalability is measured both by the training time for an epoch and the throughput at inference time\footnote{We define throughput as the average number of samples per second that a model can process in its forward pass.}, denoted \textit{Train} and \textit{Infer.}

\textbf{Baseline Models}: We compare our method with a wide variety of GNNs for 3D materials prediction tasks: $\text{E}(3)$-invariant models Schnet \cite{schutt2017schnet}, Dimenet++ \cite{klicpera2020fast}, GemNet-T \cite{gasteiger2021gemnet}, SphereNet \cite{liu2021spherical}, ComeNet \cite{wang2022comenet}, SpinConv \cite{shuaibi2021rotation}, GemNet-OC \cite{gasteiger2022gemnet} and Graphformer \cite{ying2021transformers}; data augmentation based ForceNet \cite{hu2021forcenet}; equivariant models So3krates \cite{frank2022so3krates}, PaiNN \cite{schutt2021equivariant} and EGNN \cite{satorras2021n}; as well as GNS \cite{godwin2021simple} and SpookyNet \cite{unke2021spookynet}.

\subsubsection{OC20}
\label{sec:oc20}

\textsl{OC20} \citep{zitnick2020introduction} is a large dataset for catalysis discovery. It was constructed to train ML models to approximate DFT for structure relaxation and energy prediction, which is fundamental to determine a catalyst’s activity and selectivity. 
\textsl{OC20} contains 1,281,040 DFT relaxations of randomly selected catalysts and adsorbates from a set of plausible candidates, where the catalyst surface is defined by a unit cell periodic in all directions. We focus on 2 tasks:
\begin{itemize}
    \item \textit{Initial Structure to Relaxed Energy} (IS2RE), that is the direct prediction of the relaxed adsorption energy from the initial atomic structure, i.e. a graph regression task requiring $\text{E}(3)$-invariance. It comes with a pre-defined train/val/test split, 450,000 training samples and hidden test labels. 
    \item \textit{Structure to Energy and Forces} (S2EF), that is the prediction of both the overall energy and atom forces, from a set of 2M 3D material structures. According to the dataset creators, the 2M split closely approximates the much more expensive full S2EF dataset, making it suitable for model evaluation \cite{chanussot2021open}.
\end{itemize}

We evaluated all models on the four $\sim$25K samples splits of the validation set:  In Domain (ID), Out of Domain Adsorbates (OOD-ads), Out of Domain catalysts (OOD-cat), and Out of Domain Adsorbates and catalysts (OOD-both). We measure performance on each validation split by the energy MAE (IS2RE, S2EF), atomic forces MAE (S2EF) and the percentage of predicted Energies within a Threshold (EwT) of the ground truth energy (IS2RE). Running times are taken over the ID validation set on similar GPU types\footnote{If anything, the A6000 used by ComENet is benchmarked to be faster than the RTX8000 we have used for these models.}. 

\begin{figure}[h]
    \centering
    \includegraphics[width=0.43\textwidth]{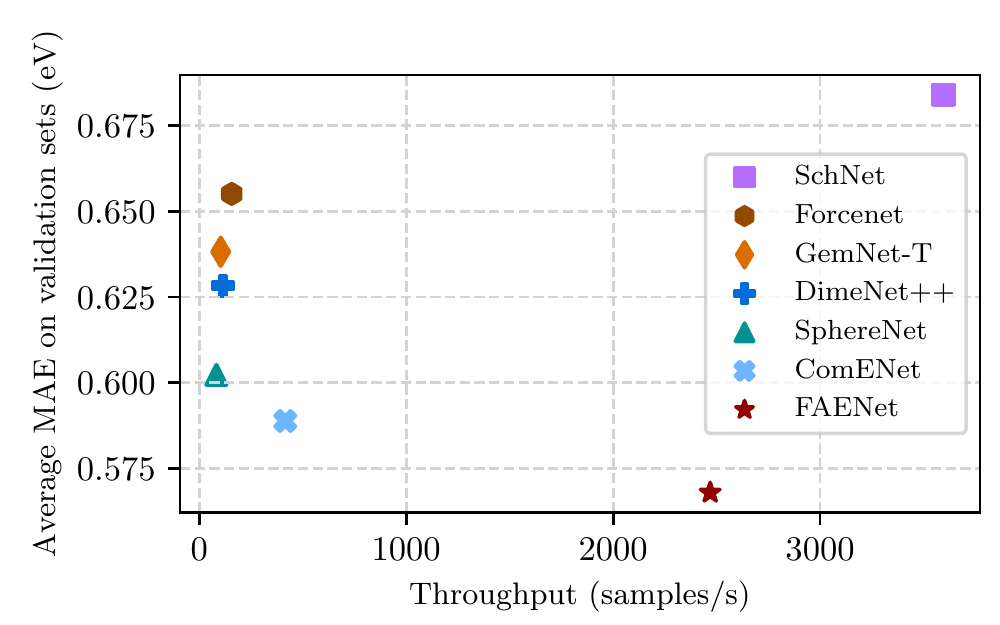}
    \caption{OC-20 Direct IS2RE performance / scalability trade-off. FAENet achieves the best MAE across methods while being much faster than the majority of baselines.}
    \label{fig:is2re_pareto}
\end{figure}

\textbf{Results IS2RE}: ~\Cref{fig:is2re_pareto} and the corresponding full table of results (\Cref{tb:result_oc20_is2re_full}) show that FAENet outperforms all existing baseline approaches in terms of Energy MAE and EwT, except for Graphormer \citep{ying2021transformers}. While we could not obtain comparable running times for Graphormer, we note that it uses an ensemble of 31 models and requires 372 GPU-days to train with A100 GPUs. Similarly, GNS, the runner up of FAENet in terms of MAE, is a 50-layer model which is extremely expensive to run. Lastly, FAENet is 21 times faster than DimeNet++ and 6 times faster than ComeNet at inference time (\textit{w.r.t.} throughput). On the other hand, while SchNet is still slightly faster than FAENet, we improve the average 
validation MAE by 17\%. In a word, as illustrated in the Pareto plot in \Cref{fig:is2re_pareto}, FAENet presents an extremely attractive performance-scalability trade-off.

\textbf{Results S2EF 2M}: Given the substantial cost of training on S2EF compared to IS2RE, results reported in \Cref{tb:result_oc20_s2ef} include fewer but still enough baselines to showcase the great performance-scalability feature of FAENet. Indeed, FAENet outperforms most baselines in terms of Energy MAE and Force MAE, while presenting the shortest runtime by a sizeable margin (e.g. even faster than SchNet on this dataset for optimal model architectures). Compared to ForceNet, the only other data augmentation approach, FAENet improves energy MAE by $38\%$ while decreasing training time for an epoch by a factor of 20. This illustrates the benefits of our symmetry-preserving DA framework combined with our efficient message passing GNN design. Similarly to Graphormer in IS2RE, GemNet-OC achieves outstanding results on this dataset. Yet, GemNet-OC is significantly larger and more computationally expensive, processing 34 times less samples by GPU seconds than FAENet with respect to inference throughput \citep{gasteiger2022gemnet}. GemNet-OC's functioning is also more complex given the constraints imposed on its functioning. Once again, FAENet offers an extremely desirable performance-scalability compromise.

\begin{table}[h]
\centering
        \caption{Results on OC20 S2EF 2M, averaged accross all 4 validation splits. Performance is measured in terms of energy and forces MAEs. The best one is bolded, the second best is underlined. Scalability is measured with training time for one epoch (\textit{Train}, in minutes) and inference throughput (\textit{Infer.}, samples per second at inference time). Table \ref{tb:result_oc20_s2ef_full} shows each val. split score. 
        }
    \label{tb:result_oc20_s2ef}
    \begin{tabular}{lcccc}
    \toprule
    & \multicolumn{2}{c}{Scalability} &\multicolumn{2}{c}{Average MAE} \\
Model     & Train $\downarrow$ & Infer. $\uparrow$ & Energy $\downarrow$ & Forces $\uparrow$\\
\midrule
SchNet    & \underline{98min}              & \underline{509}               & 1.4917              & 83.1 \\
DimeNet++ & 1,157min            & 52                & 0.8096              & 66.3 \\
ForceNet  & 1,476min                 & 36                & 0.7548                  & 61.0 \\
GemNet-OC & --                 & 18                & \textbf{0.2860}     & \textbf{25.7} \\
FAENet    & \textbf{75min}              & \textbf{623}               & \underline{0.4642}  & \underline{57.5} \\
\bottomrule
\end{tabular}
\end{table}

\subsubsection{QM7-X}

\textbf{QM7-X} \citep{hoja2021qm7} is a dataset containing ~7K molecular graphs with up to seven non-hydrogen atoms (C, N, O, S, Cl), drawn from the GDB13 chemical universe \citep{blum2009970}. After sampling and optimizing the structural and constitutional (stereo)isomers of each graph, the authors obtained 42K equilibrium structures. Each of them was then perturbed in order to obtain 99 additional non-equilibrium structures, leading to a total of about 4.2M samples. Of the 47 available targets, we focus on energy and forces as per previous work~\cite{unke2021spookynet,frank2022so3krates} but also because they are the main quantities of interest in real-world applications we care about.

\textbf{Results}: We report FAENet's results in Table~\ref{table:app:qm7x} alongside the performance of SchNet, PaiNN, So3krates and SpookyNet. Since QM7-X is a very recent benchmark, only SpookyNet (which also computes SchNet and PaiNN performance) and So3krates have reported results on it for a comparable task. Unfortunately, the authors of the SpookyNet did not provide their code implementations, making it impractical for us confirm the results. We were able to reproduce the results for SchNet after authors of So3krates shared their codes with us, which included an unreported normalization method to train on shifted energy targets\footnote{\href{https://github.com/thorben-frank/mlff/blob/v0.1/mlff/examples/02_Multiple_Structure_Training.ipynb}{https://github.com/thorben-frank/mlff}}. Nonetheless, we observe that FAENet achieves close-to-SOTA performance in spite of its simple architecture.

\begin{table}[h]
\centering
\caption{QM7x results for FAENet and baseline methods based on best-effort replication given limited code availability. FAENet achieves competitive MAE. Best in bold, second best underlined.}
\label{table:app:qm7x}
\begin{tabular}{lcccc}
\hline
\multicolumn{1}{c}{\textbf{}} & \multicolumn{2}{c}{\textit{Known molecules}} & \multicolumn{2}{c}{\textit{Unknown molecules}} \\
Model     & Energy & Forces & Energy & Forces \\ \hline
SchNet    & 42.43 & 56.45 & 51.05  & 65.84 \\
PaiNN     & 15.69 & 20.30 & 17.59 & 24.16 \\
So3krates & 15.22 & 18.44 & 21.75 & 23.16 \\
SpookyNet & \textbf{10.62} & \textbf{14.85}  & \textbf{13.15} & \textbf{17.32}  \\
FAENet    & \underline{11.42}  & \underline{17.54} & \underline{15.17}  & \underline{21.23}      \\ \hline
\end{tabular}
\end{table}

\subsubsection{QM9}

\textbf{QM9} \cite{ramakrishnan2014quantum} is a widely used dataset for molecular property prediction. It includes geometric, energetic, electronic, and thermodynamic properties for 134K stable small organic molecules with up to 9 heavy C, O, N, and F atoms. The dataset is split into 110K molecules for training, 10K for validation, and remaining 14K for testing.

\textbf{QM9 Evaluation}: We compare FAENet to a variety of methods that have publicly shared implementations and results. We trained independently on each target property a single FAENet model. Following \cite{liu2021spherical}, we report each model's MAE and training time for a single epoch. We also report its mean error and associated standard deviation across all molecular properties in QM9. To enable comparisons across properties units and ranges, we compute the average \textit{relative improvement}\footnote{For a model's MAE $m_p$ on property $p$, we report $r = \frac{m_p^0 - m_p}{m_p^0}$ where $m_p^0$ is the MAE of SchNet on $p$.} with respect to SchNet.

\textbf{Results}: The full table of results can be found in the Appendix~\ref{tb:qm9-results}. We plot relative improvements against the time we have measured models to take over a training epoch in Fig.~\ref{fig:qm9-pareto}. In addition to achieving competitive modeling performance, FAENet is the only model faster than SchNet. 
\begin{figure}[h]
    \centering
    \includegraphics[width=0.48\textwidth]{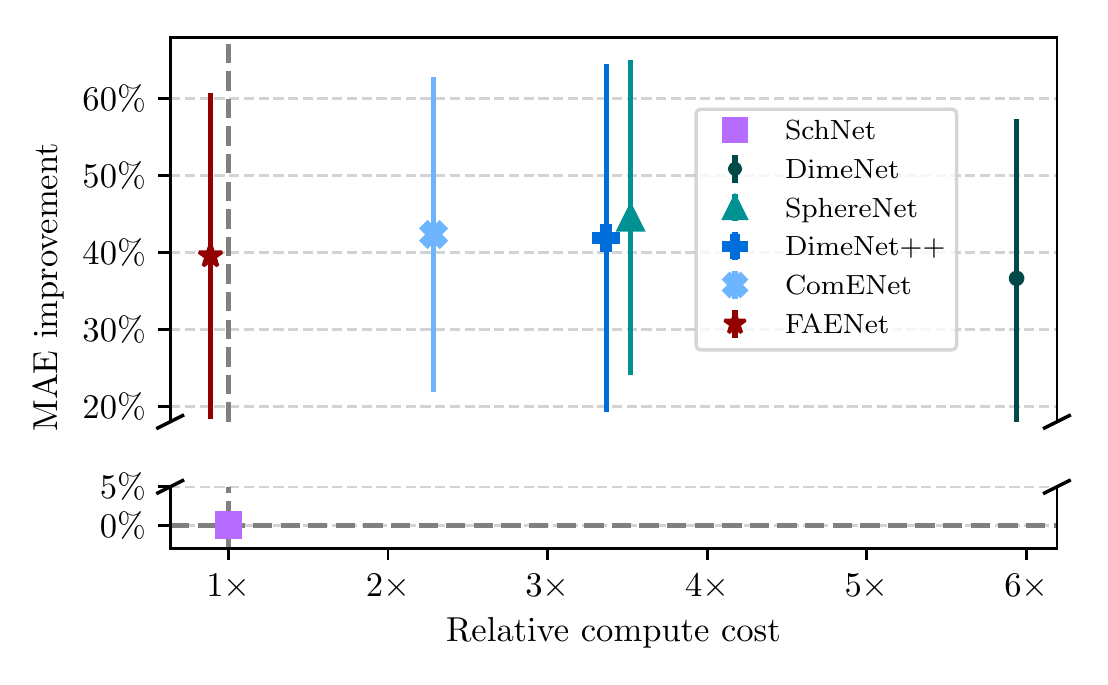}
    \vspace{-1cm}
    \caption{Performance/scalability trade-off on QM9. Large standard deviations and close means on the MAE average relative improvement against SchNet show that in recent years, models have saturated the complexity of QM9. There is however a wide range of (measured) computational costs. FAENet is the fastest.}
    \label{fig:qm9-pareto}
\end{figure}


\label{sec:Experiments}

\section{Conclusion}
In this paper, we offer an alternative to traditional symmetry-preserving GNN models for 3D materials property prediction. Instead of constraining the GNN functioning, we propose a flexible model-agnostic framework to enforce symmetries through a carefully defined data augmentation method built upon frame averaging. We then use this framework to design FAENet: a simple, fast and expressive GNN model, which directly and unrestrictedly processes atom relative information. While our empirical analysis demonstrates that Stochastic FA provides near-perfect symmetry preservation at up to 8$\times$ faster inference time, the rest of our experiments show that FAENet (with SFA) achieves competitive modeling performance and superior scalability on various materials modeling tasks.

\textbf{Future work}. While our approach provides clear advantages, future work can improve upon current limitations by studying the conditions when exact equivariance can be imposed with less than the maximum number of frames. This could be particularly important in high-risk applications where guarantees are required to ensure safety. Additionally, both modeling performance and compute scalability require further improvement to effectively enable large-scale computational design of novel materials systems. Once achieved, the combination of fast and robust property prediction GNN with ML-based generative methods like diffusion models, reinforcement learning or GFlowNets, shall enable to meaningfully explore the vast search space in materials discovery.

\textbf{Acknowledgements}. This research is supported in part by ANR (French National Research Agency) under the JCJC project GraphIA (ANR-20-CE23-0009-01). Alexandre Duval acknowledges support from a Mitacs Globalink Research Award. Alex Hernandez-Garcia acknowledges the support of IVADO and the Canada First Research Excellence Fund. David Rolnick acknowledges support from the Canada CIFAR AI Chairs Program. The authors also acknowledge material support from NVIDIA and Intel in the form of computational resources, and are grateful for technical support from the Mila IDT team in maintaining the Mila Compute Cluster.

\bibliography{main}
\bibliographystyle{icml2023}

\newpage
\appendix
\onecolumn

\section{Frame averaging}
\label{app:sec:SFA}

\begin{proof}
Let $D=(X,Z,C,O) \in V$ be an arbitrary atomic system with $X, Z, C, O$ as given in Sec.~\ref{subsec:frame-construction}. As our definition of the frame $\F$ in Eq.~(\ref{eq:frames}) does not depend on $Z, C, O$, we propose a simple adaptation of Proposition 1 from \cite{puny2022frame} to prove that $\F$ is G-equivariant and bounded for the space of materials $V$.

To show that $\F$ is translation equivariant, we need to show that $\mathcal{F}(\rho_1(g)D)=g \mathcal{F}(D)$ for all translations $g = (\mathbf{1}_D, \s) \in T(3)$. On one hand, the group product of these transformations is given by $g \F(D)=(\mathbf{1},\s)(U, \t) = (U, \t + \s)$. On the other hand, we compute below the frame of a translated version of $D$, written $D' = \rho_1(g)(D)$. Let $\t'$, $\Sigma'$, $U'$ denote respectively its centroid, its covariance matrix and the related matrix of eigenvectors.
\begin{align*}
    \rho_1(g)(D) &= (X + \1 \s^\top, Z, C + \1 \s^\top, O) \\
    \t' &= \frac{1}{n} (X + \1 \s^\top)^\top \1 = \frac{1}{n} X^\top \1 + \frac{1}{n} \s \1^\top \1 = \t + \s  \\
    \Sigma' &= \big( (X + \1 \s^\top) - 1 \t'^\top)^\top   (X + \1 \s^\top) - 1 \t'^\top) \big) \\
    & = (X - \1 t^\top)^\top (X - \1 t^\top) = \Sigma
\end{align*}
Hence, $\F(\rho_1(g)(D)) = \{ (U, \t + \s) | U = [\pm \mathbf{u}_1, \pm \mathbf{u}_2, \pm \mathbf{u}_3] \} = g \F(D)$. 

To show that $\F$ is rotation equvariant, we repeat the above with $g = (R, \mathbf{0}) \in SO(3)$. The group product of these transformations is $g \F(D) = (RU, R\t)$. On the other hand, we compute the frame of a rotated version of $D$, denoted $D'=\rho_1(g)(D)$. 
\begin{align*}
    \rho_1(g)(D) &= (XR^T, Z, CR^T, O) \\
    \t' &= \frac{1}{n} (XR^\top)^\top \1 = \frac{1}{n} R X^T \1 = R \t   \\
    \Sigma' &= (XR^T - \1 \t'^\top)^\top (XR^T - \1 \t'^\top) \\
    & = (XR^T - \frac{1}{n} \1(\1^\top XR))^\top (XR^T - \frac{1}{n} \1(\1^\top XR))^\top \\
    & = R(X-\t)^\top (X - \t)^\top = R \Sigma R^\top
\end{align*}
The eigendecomposition of $\Sigma'$ yields $\F(\rho_1(g)(D)) = \{ (RU, R\t) | [\pm \mathbf{u}_1, \pm \mathbf{u}_2, \pm \mathbf{u}_3] \} = g \F(D)$. As a result, $\F$ is E(3)-equivariant. 

$\F$ is also bounded as for compact $K \in \R^{n \times d}$, the translations are compact and therefore uniformly bounded for $X \in K$. Orthogonal matrices always satisfy $||R||_2= 1$. 

Next, an adaptation of Theorem 4 \cite{puny2022frame} proves that any arbitrary continuous equivariant function $\psi: V \rightarrow W$ approximable by a neural network $\Phi$ over $K_{\F}$ is approximable by $\langle\Phi\rangle_{\F}$. 
We let $c>0$ be a constant and consider an arbitrary $X \in K$. $\Psi: V \rightarrow W$ is an arbitrary G-equivariant function. 
\begin{align*}
    || \Psi(D) - \langle\Phi\rangle_{\F}(D)||_W &\leq \frac{1}{|\F(D)|} \sum_{g \in \F(D)} ||\rho_2(g) \Psi(\rho_1^{-1}(D)) - \rho_2(g)\Phi(\rho_1(g)^{-1}(D)) ||_W \\
    &\leq \max_{g \in \F(D)} || \rho_2(g)||_{op} || \Psi(\rho_1^{-1}(D)) - \Phi(\rho_1^{-1}(D)) ||_W \\
    &\leq c || \Psi - \Phi||_W    
\end{align*}

This completes the proof. 

\end{proof}

\textbf{Justification for new definition of $\rho_1$, $\rho_2$}. When $\text{E}(3)$ transformations are applied to materials with periodic crystals structures, one also needs to rotate the unit cell on which the structure is defined, in addition to the atom positions. Indeed, in OC20, crystal materials are defined as semi-infinite repeating substructures, and one therefore needs to define a unit cell. The graph's adjacency matrix is then constructed using periodic boundary conditions (pbc) as follows:
\begin{align}
    A_{ij} &= 
    \begin{cases}
        1 \text{ if } d_{ij} < c\\
        0 \text{ otherwise}
    \end{cases} \nonumber \\
    \text{with } d_{ij} &= || (\mathbf{x}_i - \mathbf{x}_j) + O_{ij*} \cdot C|| \text{ and cutoff radius } c.
\end{align}
So for two different rotations of the same atomic system $D$, say $D_1$ and $D_2$, frame averaging projects $D_1$ and $D_2$ to the same canonical position $\hat{X}$. However, their respective unit cell will be different. Since GNN predictions depend both on unit cell and atom positions, this will yield different predictions. In different terms, not applying the transformations $\rho_1$, $\rho_2$ on the unit cell too would break rotation equivariance, which was verified empirically. We therefore extended their definition to guarantee E(3)-equivariance for the space of materials with periodic crystal structures.

\textbf{Check that predictions are E(3)-equivariant}. Let $D_1$ and $D_2$ be two transformed versions of the same atomic system $D$ such that $\rho_1(g)D_2 = D_1$ for an arbitrary $g=(R, \mathbf{s}) \in E(3)$. 
$\F(D_1)$ is defined as in \Cref{subsec:frame-construction}, with canonical representation 
\begin{align*}
    \mathfrak{C}_1 &= \{\rho_1(g_1)^{-1}(D_1)| g_1 \in \F(D_1)\} \\
    \rho_1(g_1)^{-1}(D_1) &= \big((X - \mathbf{1}\mathbf{t}^\top)U, Z, (C - \mathbf{1}\mathbf{t}^\top)U ,O\big)
\end{align*}
$\F(D_2)$ is defined using similar computations to the proof above. We obtain centroid $\t'=R \t + \s$, covariance matrix $\Sigma'=R\Sigma R^T$ and eigenvectors $U' = R U$. This yields $\F(D_2) = \{ (U', \t') | U' = [\pm \mathbf{u'}_1, \pm \mathbf{u'}_2, \pm \mathbf{u'}_3] \}$ and corresponding canonical representation
\begin{align*}
    \mathfrak{C}_2 &= \{\rho_1(g_2)^{-1}(D_2)| g_2 \in \F(D_2)\} \\
    \text{with }\rho_1(g_2)^{-1}(D_2) &= \big( (X'-\mathbf{1} \t'^\top)U', Z, (\C-\mathbf{1} \t'^\top)U', O \big) \\
    &= \big( (X R^\top + \mathbf{1} \mathbf{s} - \mathbf{1} \t^\top R^\top - \mathbf{1} \mathbf{s})RU, Z, (\C-\mathbf{1} \t^\top)RU, O \big) \\
    &= \big( (X - \mathbf{1} \t^\top)U, Z, (\C-\mathbf{1} \t^\top)U, O \big) \\
    &= \rho_1(g_1)^{-1}(D_1)
\end{align*}
Hence, $\mathfrak{C}_1$ = $\mathfrak{C}_2$. Using \Cref{eq:equiv-fa-def}, GNN predictions are E(3)-equivariant. In the invariant case, $\langle\Phi\rangle_{\mathcal{F}}(D_1) = \langle\Phi\rangle_{\mathcal{F}}(D_2)$.


\section{FAENet}
\label{app:sec:faenet}

\subsection{Atom properties}
\label{app:subsec:atom-prop}

In the embedding block, to construct atom-level embeddings, we use the following properties from the \texttt{mendeleev} Python package \citep{mendeleev2014}:

\begin{enumerate}
    \item atomic radius 
    \item atomic volume
    \item atomic density
    \item dipole polarizability
    \item electron affinity
    \item electronegativity (allen)
    \item Van-Der-Walls radius
    \item metallic radius
    \item covalent radius
    \item ionization energy (first and second order).
\end{enumerate}


\subsection{Expressivity discussion}
\label{app:subsec:expressivity}

In this section, we discuss the expressive power of FAENet. Without making any hard claims, we believe our model is expressive, that is, able to distinguish between many different atomic systems up to global group actions. Note that FAENet enforces symmetries via the data by building upon the idea of Frame Averaging, which preserves the expressive power of the backbone architecture. This allows to construct a maximally expressive GNN without any architectural constraints. Next, FAENet processes directly relative atom information in addition to atom characteristic number (and other atomic properties), implying that it leverages enough information to uniquely identify each graph, which is not the case of several GNNs (SchNet, DimeNet, etc.). In the embedding block, since we use two-layer MLPs, which are universal approximators \citep{hornik1989multilayer}, we have the ability to learn injective functions mapping atomic number and geometric information to unique latent representations (provided that the dimensions of output space are large enough).

The expressive power of FANet is thus bound to the expressive power analysis of message passing itself, whose investigation is still an active topic of research. Despite GNNs revolutionising graph representation learning, there is limited understanding about their representation power and properties. Several works recently tried to fill in this gap, especially by comparing GNNs' expressive power to the (Geometric) Weisfeiler-Lehman isomorphism test \citep{joshi2022expressive, xu2018powerful, morris2019weisfeiler}. To achieve maximal expressive power, GNNs require an injective aggregation function, an injective update function as well as injective readout function. This would allow to construct unique representations for each graph, enabling to distinguish them. However, 
it is actually extremely challenging to satisfy these conditions. Also, theory does not correlate (always) with performance for geometric GNNs. 

Given the lack of theoretical guarantees about the expressive power of Geometric GNNs, \citet{joshi2022expressive} propose a series of experiments to assess empirically their ability to distinguish between very similar molecules, up to global E(3) transformations. In particular, the authors propose three experiments, where they train GNNs to map similar (but distinct) graphs to different labels. In particular, they try to:
\begin{enumerate}
    \item Distinguish \textit{k-chains}, which tests a model’s ability to propagate geometric information non-locally as well as its ability to overcome oversquashing with increased model depth; see \Cref{app:tab:kchains}.
    \item Distinguish \textit{rotationally symmetric structures}, which test a layer’s ability to identify neighbourhood orientation; see \Cref{app:tab:rotsym}.
    \item Distinguish counterexamples from \citep{pozdnyakov2022incompleteness}, which test a layer’s ability to create discriminating fingerprints for local neighbourhoods; see \Cref{app:tab:counterexamples}.
\end{enumerate}

In this paper, we reproduce their experiments to assess the expressivity of FAENet (GNN alone) and FAENet-SFA (when combined with our stochastic frame averaging framework). In the coming subsections, we only refer to FAENet because results are identical both for FAENet et FAENet-SFA, which empirically proves that SFA preserves the expressive power of FAENet (for these use cases). Baseline results were reproduced using the provided codebase \footnote{\url{https://github.com/chaitjo/geometric-gnn-dojo}}. 
We trained FAENet in the exact similar fashion, using standard hyperparameters.

\subsubsection{k-chains}

\begin{table}[h!]
    \centering
    \includegraphics[width=0.6\linewidth]{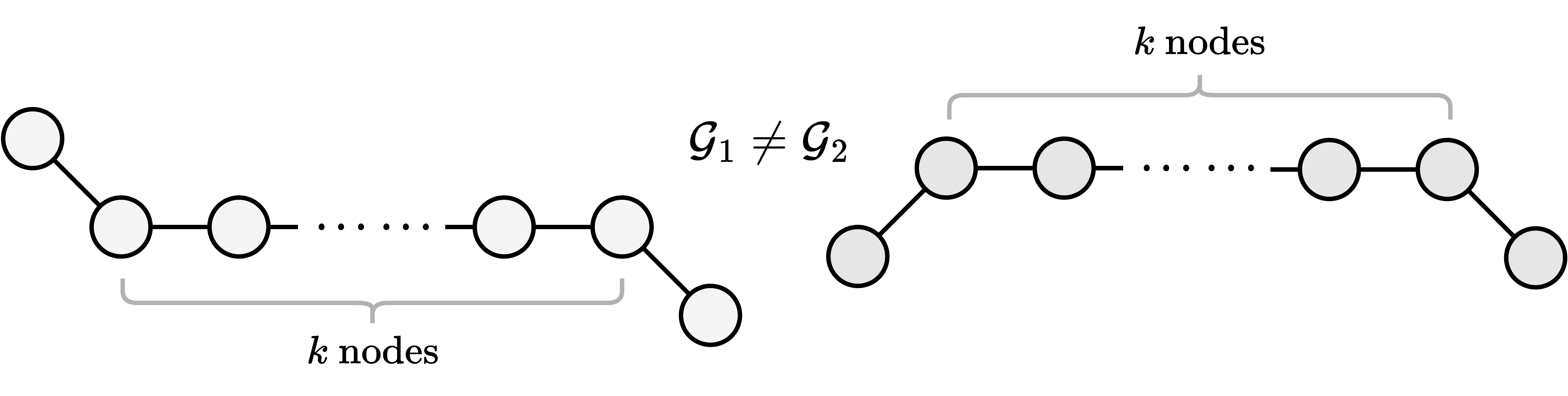}
    \begin{tabular}{clccccc}
        \toprule
        & ($k=\mathbf{4}$-chains) & \multicolumn{5}{c}{\textbf{Number of layers}} \\
        & \textbf{GNN Layer} & 1 & $\lfloor \frac{k}{2} \rfloor = 2$ & \cellcolor{gray!10} $\lfloor \frac{k}{2} \rfloor + 1 = \mathbf{3}$ & $\lfloor \frac{k}{2} \rfloor + 2$ & $\lfloor \frac{k}{2} \rfloor + 3$ \\
        \midrule
        \multirow{3}{*}{\rotatebox[origin=c]{90}{Inv.}} &
        \gray{IGWL} & \gray{50\%} & \gray{50\%} & \gray{50\%} & \gray{50\%} & \gray{50\%} \\
        & SchNet & 50.0 ± 0.00 & 50.0 ± 0.00 & 50.0 ± 0.00 & 50.0 ± 0.00 & 50.0 ± 0.00  \\
        & DimeNet & 50.0 ± 0.00 & 50.0 ± 0.00 & 50.0 ± 0.00 & 50.0 ± 0.00 & 50.0 ± 0.00 \\
        \midrule
        \multirow{5}{*}{\rotatebox[origin=c]{90}{Equiv.}} &
        \gray{GWL} & \gray{50\%} & \gray{50\%} & \gray{\textbf{100\%}} & \gray{\textbf{100\%}} & \gray{\textbf{100\%}} \\
        & E-GNN & 50.0 ± 0.00 & 50.0 ± 0.0 & \cellcolor{red!10} 50.0 ± 0.0 & \cellcolor{red!10} 50.0 ± 0.0 & \cellcolor{red!10} 50.0 ± 0.0  \\
        & GVP-GNN & 50.0 ± 0.00 & 50.0 ± 0.0 & \cellcolor{green!10} \textbf{100.0 ± 0.0} & \cellcolor{green!10} \textbf{100.0 ± 0.0} & \cellcolor{green!10} \textbf{100.0 ± 0.0} \\
        & TFN & 50.0 ± 0.00 & 50.0 ± 0.0 & \cellcolor{red!10} 50.0 ± 0.0 & \cellcolor{red!10} 50.0 ± 0.0 & \cellcolor{green!10} 80.0 ± 24.5 \\
        & MACE & 50.0 ± 0.00 & 50.0 ± 0.0 & \cellcolor{green!10} 90.0 ± 20.0 & \cellcolor{green!10} 90.0 ± 20.0 & \cellcolor{green!10} 95.0 ± 15.0\\
        & FAENet & \cellcolor{green!10} \textbf{100.0 ± 0.0} & \cellcolor{green!10} \textbf{100.0 ± 0.0} & \cellcolor{green!10} \textbf{100.0 ± 0.0} & \cellcolor{green!10} \textbf{100.0 ± 0.0} & \cellcolor{green!10} \textbf{100.0 ± 0.0}  \\
        & FAENet-SFA & \cellcolor{green!10} \textbf{100.0 ± 0.0} & \cellcolor{green!10} \textbf{100.0 ± 0.0} & \cellcolor{green!10} \textbf{100.0 ± 0.0} & \cellcolor{green!10} \textbf{100.0 ± 0.0} & \cellcolor{green!10} \textbf{100.0 ± 0.0} \\
        \bottomrule
    \end{tabular}
    \caption{\textit{$k$-chain geometric graphs.} 
    We train FAENet and FAENet-SFA with an increasing number of layers to distinguish $k=4$-chains. We report model accuracy for this binary graph classification task. An accuracy of 50 is equivalent to random predictions. We repeat the experiment 10 times and average results. Anomolous results are marked in \colorbox{red!10}{red} and expected results in \colorbox{green!10}{green}. Best results are in \textbf{bold}.
    }
    \label{app:tab:kchains}
\end{table}

\textbf{Experiment}. Here, we study FAENet's ability to incorporate and propagate geometric information beyond local neighbourhoods. To do so, we consider $k$-chain geometric graphs, each consisting of $k+2$ nodes where the $k$ central nodes are arranged in a line and the $2$ endpoints differ in their orientation. Because of the latter, every pair of $k$-chain graphs is different, up to E(3) transformations. In fact, they are $(\lfloor \frac{k}{2} \rfloor + 1)$-hop distinguishable, and $(\lfloor \frac{k}{2} \rfloor + 1)$ Geometric Weisfeiler-Lehman (GWL) iterations are theoretically sufficient to distinguish them (see \citep{joshi2022expressive} for more details). Indeed, both right-hand and left-hand parts of the chain are identical with respect to rotations, meaning that one can only distinguish between these graphs by considering these neighbourhoods together. In Table~\ref{app:tab:kchains}, we train FAENet along with $G$-equivariant and $G$-invariant GNNs (with an increasing number of layers) to distinguish $k$-chains. 

\textbf{Results}. Thanks to its novel functioning, free of all design constraints, FAENet enables to distinguish between these two $k$-chain graphs in a single message passing layer, while they are $(\lfloor \frac{k}{2} \rfloor + 1)=3$-hop distinguishable according to \citet{joshi2022expressive}. FAENet therefore outperforms all baselines with 1 or 2 GNN layers, including GWL algorithm. 
FAENet-SFA demonstrates that Stochastic FA enables preserves the expressive power of FAENet while enforcing symmetries via the data, as discussed deeply in the paper already. 
Aside from that, $G$-equivariant GNNs often require more iterations that prescribed by GWL and do not always attain perfect accuracy, pointing to preliminary evidence of oversquashing when geometric information is propagated across multiple layers using fixed dimensional feature spaces. IGWL and $G$-invariant GNNs are unable to distinguish $k$-chains for any $k \geq 2$ and $G = O(3)$.

\subsubsection{Rotationally symmetric structures}

\begin{table}[h!]
    \centering
    \includegraphics[width=0.5\linewidth]{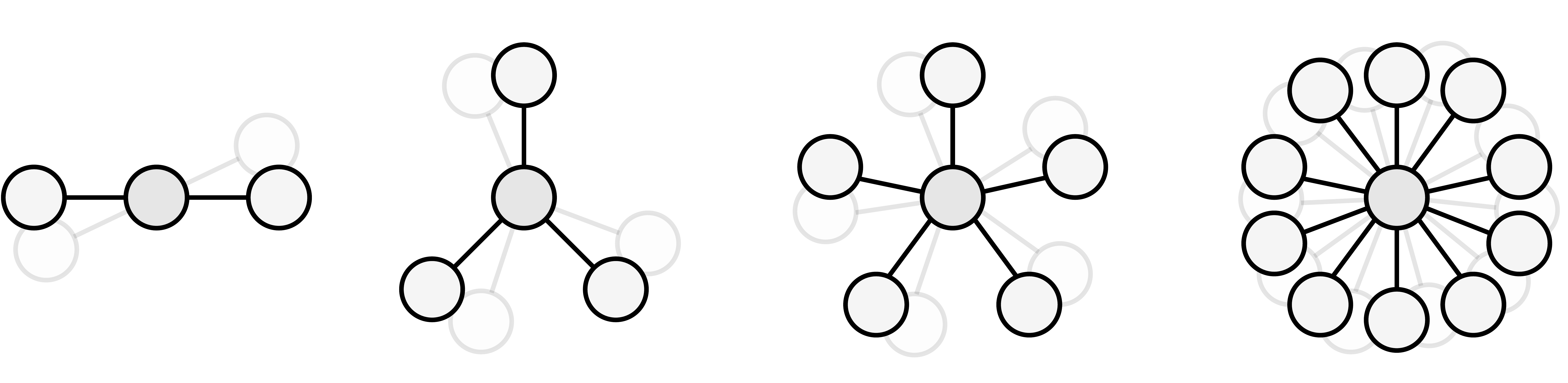}
    \begin{tabular}{clcccc}
        \toprule
        & & \multicolumn{4}{c}{\textbf{Rotational symmetry}} \\
        & \textbf{GNN Layer} & 2 fold & 3 fold & 5 fold & 7 fold \\
        \midrule
        \multirow{2}{*}{\rotatebox[origin=c]{90}{Scalar}}
        & FAENet$_{L=0}$ & \cellcolor{green!10} \textbf{100.0 ± 0.0} & \cellcolor{green!10} \textbf{100.0 ± 0.0} & \cellcolor{green!10} \textbf{100.0 ± 0.0} &  \cellcolor{green!10} \textbf{100.0 ± 0.0}  \\
        & FAENet-SFA$_{L=0}$ & \cellcolor{green!10} \textbf{100.0 ± 0.0} & \cellcolor{green!10} \textbf{100.0 ± 0.0} & \cellcolor{green!10} \textbf{100.0 ± 0.0} & \cellcolor{green!10} \textbf{100.0 ± 0.0} \\ 
        \midrule
        \multirow{2}{*}{\rotatebox[origin=c]{90}{Cart.}}
        & E-GNN$_{L=1}$ & \cellcolor{red!10} 50.0 ± 0.0 & 50.0 ± 0.0 & 50.0 ± 0.0 & 50.0 ± 0.0 \\
        & GVP-GNN$_{L=1}$ & \cellcolor{red!10} 50.0 ± 0.0 & 50.0 ± 0.0 & 50.0 ± 0.0 & 50.0 ± 0.0 \\
        \midrule
        \multirow{5}{*}{\rotatebox[origin=c]{90}{Spherical}}
        & TFN/MACE$_{L=1}$ & \cellcolor{red!10} 50.0 ± 0.0 & 50.0 ± 0.0 & 50.0 ± 0.0 & 50.0 ± 0.0 \\
        & TFN/MACE$_{L=2}$ & \cellcolor{green!10} \textbf{100.0 ± 0.0} & 50.0 ± 0.0 & 50.0 ± 0.0 & 50.0 ± 0.0 \\
        & TFN/MACE$_{L=3}$ & \cellcolor{green!10} \textbf{100.0 ± 0.0} & \cellcolor{green!10} \textbf{100.0 ± 0.0} & 50.0 ± 0.0 & 50.0 ± 0.0 \\
        & TFN/MACE$_{L=5}$ & \cellcolor{green!10} \textbf{100.0 ± 0.0} & \cellcolor{green!10} \textbf{100.0 ± 0.0} & \cellcolor{green!10} \textbf{100.0 ± 0.0} & 50.0 ± 0.0 \\
        & TFN/MACE$_{L=7}$ & \cellcolor{green!10} \textbf{100.0 ± 0.0} & \cellcolor{green!10} \textbf{100.0 ± 0.0} & \cellcolor{green!10} \textbf{100.0 ± 0.0} & \cellcolor{green!10} \textbf{100.0 ± 0.0} \\
        \bottomrule
    \end{tabular}
    \caption{\textit{Rotationally symmetric structures.} We train single layer $G$-equivariant GNN models (with order $L$ tensors) to distinguish two \emph{distinct} rotated versions of $L$-fold symmetric structures. We report model accuracy of this binary graph classification task, averaged over 10 runs. Anomolous results are marked in \colorbox{red!10}{red} and expected results in \colorbox{green!10}{green}. Best in \textbf{bold}. 
    }
    \label{app:tab:rotsym}
\end{table}

In this experiment, we try to evaluate FAENet's ability to discriminate the orientation of structures with rotational symmetry. To do so, \citet{joshi2022expressive} propose to train geometric GNNs to assign two different rotations of some L-fold symmetric structures (attached to an existing system) to different classes. 

\textbf{Results} are displayed in \Cref{app:tab:rotsym}. Although this experiment was designed to illustrate the utility of higher-order tensors in G-equivariant GNNs, FAENet shows outstanding performance for a method using scalar vectors ($L=0$) only. 
Methods using cartesian vectors cannot even discriminate two-folds structures; and methods with spherical tensors of order $L$ are unable to identify the orientation of structures with rotation symmetry greater than $L$-fold, which suggests high computational cost. Here again, FAENet appears as the most desirable method, combining simple functioning, scalability and expressivity.

\subsubsection{Edge cases}

\begin{table}[h!]
    \centering
    \begin{tabular}{clccc}
        \toprule
        & & \multicolumn{3}{c}{\textbf{Edge cases}} \\
        & \multirow{1}{*}{\textbf{GNN Layer}} & 2-body & 3-body & 4-body \\
        \midrule
        \multirow{2}{*}{\rotatebox[origin=c]{90}{Inv.}} 
        & SchNet$_{\text{2-body}}$ & \cellcolor{red!10} 50.0 ± 0.0 & 50.0 ± 0.0 & 50.0 ± 0.0 \\
        & DimeNet$_{\text{3-body}}$ & \cellcolor{green!10} \textbf{100.0 ± 0.0} & 50.0 ± 0.0 & 50.0 ± 0.0 \\
        \midrule
        \multirow{6}{*}{\rotatebox[origin=c]{90}{$O(3)$-Equiv.}} 
        & E-GNN$_{\text{2-body}}$ & \cellcolor{red!10} 50.0 ± 0.0 & 50.0 ± 0.0 & 50.0 ± 0.0 \\
        & GVP-GNN$_{\text{3-body}}$ & \cellcolor{green!10} \textbf{100.0 ± 0.0} & 50.0 ± 0.0 & 50.0 ± 0.0 \\
        & TFN$_{\text{2-body}}$ & \cellcolor{red!10} 50.0 ± 0.0 & 50.0 ± 0.0 & 50.0 ± 0.0 \\
        & MACE$_{\text{3-body}}$ & \cellcolor{green!10} \textbf{100.0 ± 0.0} & 50.0 ± 0.0 & 50.0 ± 0.0 \\
        & FAENet$_{\text{2-body}}$ & \cellcolor{green!10} \textbf{100.0 ± 0.0} & \cellcolor{green!10} \textbf{100.0 ± 0.0} & \cellcolor{green!10} \textbf{100.0 ± 0.0} \\
        & FAENet-SFA$_{\text{2-body}}$ & \cellcolor{green!10} \textbf{100.0 ± 0.0} & \cellcolor{green!10} \textbf{100.0 ± 0.0} & \cellcolor{green!10} \textbf{100.0 ± 0.0} \\
        \bottomrule
    \end{tabular}
    \caption{\textit{Counterexamples from \citep{pozdnyakov2022incompleteness}.}
    We train geometric GNNs at distinguishing edge cases structures, which are (supposedly) indistinguishable using $k$-body scalarisation. We repeat the experiment 10 times and report the accuracy. FAENet manages to distinguish them all while presenting 2-body order only.  
    }
    \label{app:tab:counterexamples}
\end{table}

For this experiment, we look at three specific type of graphs proposed by \citep{pozdnyakov2022incompleteness} to illustrate the limitations of existing geometric GNNs. Each counterexample consists of a pair of local neighbourhoods that are indistinguishable when comparing their set of k-body scalars\footnote{Body-order of scalarisation: number of nodes involved in computing local invariant scalars.}. The 3-body and 4-body counterexamples correspond respectively to Fig.1(b) and Fig.2(e) in \citep{pozdnyakov2022incompleteness}, while the 2-body counterexample corresponds to the running example of \citep{joshi2022expressive}. The aim of the task is the same, meaning training a single layer GNN to create discriminating fingerprints for local neighbourhoods. 

\textbf{Results} are reported in \Cref{app:tab:counterexamples}. While \citet{joshi2022expressive} concluded that Geometric GNN layers with body order $k$ cannot distinguish the corresponding k-body counterexample, FAENet does enable to distinguish the $4$-body counterexample while being only a $2$-body method\footnote{Actually, since FAENet utilises non-linearities in its message passing, the concept of k-body order looses relevance. FAENet could be considered as a many-body method, demonstrating the advantage of its frame based paradigm.}. This demonstrates its expressivity and the strength of the novel paradigm proposed (i.e. unconstrained GNN functioning; symmetries enforced via the data without loosing expressive power). 



\subsection{Ablation Study on IS2RE}
\label{app:subsec:ablation-study}

\begin{table}[h!]
\centering
\caption{Ablation study of FAENet -- we study the impact of various components on the model performance-scalability trade-off. Scalability is measured via the throughput (the number of processed samples per seconds) while performance is measured by the Energy MAE, averaged across all four validation splits.  
\\}
\label{tab:ablation-study}
\begin{tabular}{l|cc}
\toprule
\textit{Component} & \textit{Throughput} & \textit{MAE} \\ 
 & (samples/s) & (average) \\
\midrule
\textbf{FAENet} (baseline)                 & \textbf{2129}         &    \textbf{568} \\
\textit{\indent - Only} $\vec{r}_{ij}$  & 2390            & 603  \\
\textit{\indent - Only Spherical Harmonics}   & 1441             & 626 \\
\textit{\indent - All}   & 1560             & 585    \\
\textit{\indent - $2^{nd}$ layer in MLP}         & 1828           & 588  \\
\textit{\indent - Simple MP} & 2706      & 606 \\
\textit{\indent - Basic MP} & 3065      & 629 \\
\textit{\indent - Updownscale MP} & 1505      & 592 \\
\textit{\indent - Attention mecha} & 1929      & 712 \\
\textit{\indent - No complex MP}           & 2265              & 586  \\
\textit{\indent - Simple Energy Head}            & 2224            & 589  \\
\textit{\indent - No Jumping Connections}         & 2038             & 582  \\
\bottomrule
\end{tabular}
\end{table}

In this section, we perform a thorough ablation study of FAENet on the OC20 IS2RE dataset. Even if we motivate a simple and efficient GNN architecture, we propose in this paper a new paradigm for incorporating geometric information into atom representations (since we construct a model free of all symmetry-preserving constraints). We therefore compared various relevant design choices with respect to the embedding block, the message passing scheme and the output block. The results displayed in \Cref{tab:ablation-study} empirically validate FAENet's current architecture. In particular, we tried the following ideas and drew the corresponding conclusions:
\begin{itemize}
    \item \textit{Only} $\vec{r}_{ij}$ involves using $\vec{r}_{ij}$ alone in the embedding block to derive each edge representation, instead of $\vec{r}_{ij} || RBF(d_{ij})$. Please refer to \Cref{eq:edge-embedding}. \\
    $\rightarrow$ Using a radial basis function (RBF) of distance information in addition to $r_{ij}$ improves performance significantly. Running time is a bit higher, obviously, but not enough to outweigh the performance gains. 
    \item \textit{Only Spherical harmonics}: we replace $\big(\Vec{r}_{ij} || \text{RBF}(d_{ij})\big)$ by spherical harmonics (of order up to $3$) on $\dfrac{\Vec{r}_{ij}}{||\Vec{r}_{ij}||}$ in \Cref{eq:edge-embedding}. Spherical harmonics form a basis for irreducible representations in SO(3) and were shown to be very informative quantities by previous works \citep{frank2022so3krates, thomas2018tensor, batatia2022mace}. \\
    $\rightarrow$ We show that using directly a 2-layer MLP of relative information is more attractive than (higher-order) spherical harmonics. 
    \item \textit{All}. We concatenate spherical harmonics to directional information and distance information ($\vec{r}_{ij}$, $d_{ij}$ and SH). \\
    $\rightarrow$ We show that adding spherical harmonics in addition to directional information does not provide additional benefits for this task.  
    \item \textit{2-layer-MLP}: we replace the two-layer-MLP of the Embedding block by a one-layer-MLP, see \Cref{eq:edge-embedding}. \\
    $\rightarrow$ No clear conclusion here. Reducing MLP to 1-layer hurts performance but improves slightly compute time. We favoured performance here. 
    \item \textit{Basic MP}: we do not transform the edge embedding inside the interaction block. $\f_{ij}^{(l)} = \e_{ij}$ \\
    $\rightarrow$ This message passing form is much more scalable (obviously) but significantly less efficient. This proves the relevance of refining the filters using edge information and atom information at each layer. 
    \item \textit{Simple MP}: we do not compute the filter using node features $\h_i$ and $\h_j$. In mathematical form, we use $\f_{ij}^{(l)} = \sigma (\text{MLP}(\e_{ij}))$ instead of $\f_{ij}^{(l)} = \sigma (\text{MLP}(\e_{ij}||\h_i^{(l)}||\h_j^{(l)}))$. \\
    $\rightarrow$ This Message Passing form is shown to be more scalable but not as efficient. Using an edge's endpoints helps learn more relevant filters.  
    \item \textit{Updownscale MP}: we additionally transform node embeddings inside each interaction block, before the continuous convolution. \\
    $\rightarrow$ This does not show increased performance, validating current design.   
    \item \textit{Attention mecha}: we weight each message by an attention coefficient, learned using neighbour's feature vector, node's feature vector and edge embedding. This is expressed as $h_i^{(l+1)}= h_i^{(l)} + \sum_{j \in N_i} \alpha_{ij} W h_j^{(l)}$ with $\alpha_{ij}= Att(h_i, h_j, \vec{r}_{ij}) = h_i^T [e_{ij} \odot h_j] / \sqrt{F})$. \\
    $\rightarrow$ It does not yield good results in practice. Current filter construction is more efficient. 
    \item \textit{No complex MP}: we use two-layers MLP instead of one-layer MLP in each Interaction block of the model (see Fig.~\ref{fig:gnn-pipeline}). \\
    $\rightarrow$ This was not found beneficial, both in terms of performance and running time. 
    \item \textit{Simple Energy Head}: involves replacing the current weighted average of atom's final representation to compute graph-level prediction (in the Output Block) by the standard global pooling approach. \\
    $\rightarrow$ The weighted average is found (slightly) beneficial.
    \item \textit{No jumping connections}: We remove the jumping connections from each Interaction Block to the Output block. \\
    $\rightarrow$ Jumping connections are found (slightly) beneficial.  
\end{itemize}


\section{Extended Results}
\label{app:sec:extended-results}


\subsection{Experimental setup -- Software and Hardware}
\label{app:subsec:experimental-settings}

We would like to acknowledge and thank authors of the following Python libraries that we have used to realise this work, in particular Pytorch v.1.13~\cite{NEURIPS2019_bdbca288}, PyTorch Geometric v2.2.0~\cite{Fey/Lenssen/2019} NumPy v1.23.5~\cite{harris2020array} and authors of the OCP Github repository which we used as a starting point.

Experiments were run on 1 NVIDIA RTX8000 GPUs. We would like to also thank our lab's IT staff for putting together and maintaining our computing infrastructure as well as for supporting researchers.


\subsection{Hyper Parameters}

We detail FAENet's list of hyperparameters for all four datasets IS2RE, S2EF, QM7-X and QM9 in Table \ref{tab:app:hps}. 

\begin{table}[h]
\centering
\caption{Model and optimization hyper parameters for FAENet used in experiments reported. }
\label{tab:app:hps}
\begin{tabular}{lcccc}
\textit{}          & \textit{OC20-IS2RE} & \textit{OC20-S2EF} & \textit{QM7X}     & \textit{QM9}      \\ \hline
\textit{Activation function}        & swish & swish   & swish  & swish \\
\textit{Cutoff distance}            & 6     & 6       & 5      & 6     \\
\textit{Force head hidden channels} & -     & 256     & 256    & -     \\
\textit{Hidden channels}            & 384   & 256     & 500    & 400   \\
\textit{Max number of neighbors}    & 40    & 30      & 40     & 30    \\
\textit{Number of filters}          & 480   & 480     & 400    & 480   \\
\textit{Number of gaussians}        & 104   & 136     & 50     & 100   \\
\textit{Interaction blocks}         & 5     & 7       & 5      & 5     \\
\textit{Period \& Group channels}    & 64    & 64      & 32     & 32    \\
\textit{Tag channels}               & 64    & 32      & -      & -     \\ \hline
\textit{Optimizer}                  & AdamW & AdamW   & AdamW  & AdamW \\
\textit{Scheduler} & Cosine Annealing    & Cosine Annealing   & Reduce On Plateau & Reduce On Plateau \\
\textit{Warmup steps}               & 6,000  & 30,000   & 3,000   & 3,000  \\
\textit{Learning Rate}              & 0.002 & 0.00025 & 0.0002 & 0.001 \\
\textit{Batch size}                 & 256   & 192     & 100    & 64    \\
\textit{Energy loss coeff.}         & -     & 1       & 1      & -     \\
\textit{Force loss coeff.}          & -     & 100     & 100    & -     \\
\textit{Energy grad. loss coeff.}   & -     & 5       & 15     & -     \\
\textit{Steps}                      & 22K   & 85K     & 1.4M   & 1.5M  \\ \hline
\end{tabular}
\end{table}


\subsection{QM9 Dataset}
\label{subsec:app:qm9-dataset}

\begin{table*}[h]
    \begin{center}
        \resizebox{\textwidth}{!}
        {\begin{tabular}{ll|ccccccccc}
        \toprule
        Property                                        &Unit      &SchNet &PhysNet &MGCN &DimeNet &DimeNet++& SE(3)-T&SphereNet &ComENet &FAENet\\
        \midrule
        $\mu$                                           &D         &0.033 &0.0529   &0.0560 &0.0286 &0.0297 & 0.051 &0.0245     & 0.0245     & 0.0289\\
        $\alpha$                                        &${a_0}^3$ &0.235 &0.0615   &0.0300 &0.0469 &0.0435 & 0.142 &0.0449     & 0.0452     & 0.0527\\
        $\epsilon_\text{HOMO}$                          &meV       &41    &32.9     &42.1   &27.8   &24.6   & 35.0 &22.8       & 23.1       & 24.5   \\
        $\epsilon_\text{LUMO}$                          &meV       &34    &24.7     &57.4   &19.7   &19.5   & 33.0  &18.9       & 19.8       & 20.0 \\
        $\Delta\epsilon$                                &meV       &63    &42.5     &64.2   &34.8   &32.6   & 53.0 & 31.1       & 32.4       & 40.2 \\
        $\left< R^2 \right>$                            &${a_0}^2$ &0.073 &0.765    &0.110  &0.331  &0.331  & - &0.268      & 0.259      & 0.498 \\
        ZPVE                                            &meV       &1.7   &1.39     &1.12   &1.29   &1.21   & - &1.12       & 1.20       & 1.26 \\
        $U_0$                                           &meV       &14    &8.15     &12.9   &8.02   &6.32   & - &6.26       & 6.59       & 6.79 \\
        $U$                                             &meV       &19    &8.34     &14.4   &7.89   &6.28   & - &6.36       & 6.82       & 6.80  \\
        $H$                                             &meV       &14    &8.42     &14.6   &8.11   &6.53   & - &6.33       & 6.86       & 6.74  \\
        0$G$                                            &meV       &14    &9.4      &16.2   &8.98   &7.56   & - &7.78       & 7.98       & 7.91 \\
        $c_\text{v}$ &$\frac{\mbox{cal}}{\mbox{mol K}}$            &0.033 &0.028    &0.038  &0.025  &0.023  & 0.052 &0.022      & 0.024      & 0.023\\
        \midrule
        $R_{p=1..11}$                                   & mean     & -    & 0.247   &-0.021 &0.366  &0.419  & - &0.445      &0.423       &0.395\\
                                                        & std      & -    &0.331    &0.421  &0.208  &0.226  & - &0.205      &0.205       &0.212\\
        \midrule
        Time                                           & s        &98   &  -      & -      &582   &  330   & - &345       &224          &87    \\
        \bottomrule        \end{tabular}}
        \caption{Comparisons between FAENet and baseline models on QM9 dataset in terms of (1) MAE, (2) \textit{relative improvement} $R_{p=1..11}$ with respect to SchNet, and (3) Time, which we measured for 1 epoch of training using the same hardware and software.
        }
        \label{tb:qm9-results}
    \end{center}
\end{table*}

We provide the detailed list of per-model property MAE performance in Table~\ref{tb:qm9-results}. The results reported in are comptued as followed. For each property $p$ we compute the \textit{relative improvement} with respect to SchNet as:

\begin{equation}
    R_p^{\text{Model}} = \frac{\text{MAE}_p^{\text{SchNet}} - \text{MAE}_p^{\text{Model}}}{\text{MAE}_p^{\text{SchNet}}}.
\end{equation}

We then compute and report in Fig.~\ref{fig:qm9-pareto} the mean and standard deviation of the series $R_{p=0..11}^{\text{Model}}$. Note that we treat property $\left< R^2 \right>$ as an outlier and exclude it from the aforementioned statistics because SchNet is oddly the best by a very large margin and we have not been able to reproduce such good results. In addition, in spite of our effort to re-run baselines ourselves, we could not re-implement PhysNet \citep{unke2019physnet}, MGCN \citep{wang2020mgcn} nor SE(3)-Transformers \cite{fuchs2020se} in our pipeline.


\subsection{OC20 full table of results}
\label{app:subsec:full-tables}

We report the full results for OC20 IS2RE in Table \ref{tb:result_oc20_is2re_full} and for S2EF 2M in Table \ref{tb:result_oc20_s2ef_full}.

\begin{table*}[h]
    \begin{center}
        \caption{Exhaustive table of results on \textsl{OC20 IS2RE} ``All'' dataset, for all 4 validation splits (ID, OOD Ads, OOD Cat, OOD Both). We measure performance in terms of energy MAE and the percentage of Energy within Threshold (EwT) of the ground truth energy. We average results across the four val splits (Average). The best performance is shown in bold and the second best is shown with underlines. Scalability is measured with training time for one epoch (train, in minutes) and inference throughput (infer, number of samples processed in a second).
        }
    \label{tb:result_oc20_is2re_full}
    \resizebox{\textwidth}{!}
    {\begin{tabular}{l | cc | ccccc | ccccc  }
    \toprule
    & \multicolumn{2}{c|}{Time } &\multicolumn{5}{c|}{Energy MAE [eV] $\downarrow$} & \multicolumn{5}{c}{EwT $\uparrow$}  \\
    \cmidrule(l{4pt}r{4pt}){2-3}
    \cmidrule(l{4pt}r{4pt}){4-8}
    \cmidrule(l{4pt}r{4pt}){9-13}
 Model &Train $\downarrow$ &Infer. $\uparrow$ & ID &  OOD Ads & OOD Cat & OOD Both &Average& ID &  OOD Ads & OOD Cat & OOD Both & Average\\
\midrule
SchNet     & \underline{9min} &  \underline{3597} & 0.6372             & 0.7342             & 0.6611             & 0.7035             & 0.6840             & 2.96\%             & 2.22\%             & 3.03\%             & 2.38\%             & 2.65\% \\ 
DimeNet++  & 170min           & 115             & 0.5716             & 0.7224             & 0.5612             & 0.6615             & 0.6283             & 4.26\%             & 2.06\%             & 4.10\%             & 3.21\% & 3.40\% \\ 
SphereNet  & 290min           & 83              & 0.5632             & 0.6682             & 0.5590             & 0.6190             & 0.6023             & 4.56\%             & 2.70\%             & 4.59\%             & 2.70\%             & 3.64\% \\
EGNN       & -                & -                 & 0.5497             & 0.6851             & 0.5519             & 0.6102             & 0.5992             & \underline{4.99}\% & 2.50\%             & \underline{4.71}\% & 2.88\%             & 3.77\% \\
PaiNN & -                & -                 & 0.5781             & 0.7037             & 0.5701             & 0.6139             & 0.6164             & 4.31\% & 2.60\%             & 4.35\% & 2.74\%             & 3.50\% \\
Forcenet   & 120min           & 157                & 0.6582             & 0.7017             & 0.6323             & 0.6285             & 0.6551             & 3.14\%             & 2.47\%             & 2.58\%             & 2.83\%             & 3.13\% \\ 
SpinConv   & -                & -                 & 0.5583             & 0.7230             & 0.5687             & 0.6738             & 0.6309             & 4.08\%             & 2.26\%             & 3.82\%             & 2.33\%             & 3.12\% \\
GemNet-T   & 200min           & 104              & 0.5561             & 0.7342             & 0.5659             & 0.6964             & 0.6382             & 4.51\%             & 2.24\%             & 4.37\%             & 2.38\%             & 3.38\% \\
ComENet    & 20min            & 416              & 0.5558             & 0.6602             & \underline{0.5491} & 0.5901             & 0.5888             & 4.17\%             & 2.71\%             & 4.53\%             & 2.83\%             & 3.56\% \\
GNS        & -                & -                 & \underline{0.5400} & 0.6500             & 0.5500             & 0.5900             & 0.5825             & -                  & -                  & -                  & - - \\
Graphormer & -                & -                 & \textbf{0.4000}    & \textbf{0.5700}    & \textbf{0.4200}    & \textbf{0.5000}    & \textbf{0.4725}    & \textbf{8.97}\%    & \textbf{3.45}\%    & \textbf{8.18}\%    & \textbf{3.79} \%   & \textbf{6.09} \% \\
FAENet     & 12min            & 2469              & 0.5446             & \underline{0.6115} & 0.5707             & \underline{0.5449} & \underline{0.5679} & 4.46\%             & \underline{2.95}\% & 4.67\%             & \underline{3.01}\% & \underline{3.78}\% \\
\bottomrule
\end{tabular}}
\end{center}
\end{table*}

\begin{table*}[h]
    \begin{center}
        \caption{Exhaustive table of results on \textsl{OC20 S2EF} ``2M'' dataset, for all 4 validation splits (ID, OOD Ads, OOD Cat, OOD Both). This table expands on \Cref{tb:result_oc20_s2ef}. We measure performance in terms of energy MAE and forces MAE. We average results across the four val splits (Average). The best performance is shown in bold and the second best is shown with underlines. Scalability is measured with training time for one epoch (train, in minutes) and inference throughput (infer, number of samples processed in a second).
        }.
    \label{tb:result_oc20_s2ef_full}
    \resizebox{\textwidth}{!}
    {\begin{tabular}{l | cc | ccccc | ccccc  }
    \toprule
    & \multicolumn{2}{c|}{Time } &\multicolumn{5}{c|}{Energy MAE [eV] $\downarrow$} & \multicolumn{5}{c}{Force MAE [meV] $\downarrow$}  \\
    \cmidrule(l{4pt}r{4pt}){2-3}
    \cmidrule(l{4pt}r{4pt}){4-8}
    \cmidrule(l{4pt}r{4pt}){9-13}
Model     &Train $\downarrow$ &Infer.$\uparrow$ & ID       &  OOD Ads & OOD Cat & OOD Both &Average             & ID    & OOD Ads & OOD Cat & OOD Both &Average\\
\midrule
SchNet    & 98min             & 509             & 1.4075   & 1.5228   & 1.4383  & 1.5982   & 1.4917             & 76.3  & 86.4    & 79.7    & 90.1     & 83.1 \\
DimeNet++ & 1157min           &  52             & 0.7438   & 0.8425   & 0.7788  & 0.8734   & 0.8096             & 57.3  & 68.6    & 66.0    & 73.5     & 66.3 \\
ForceNet  & 1476min                & 36              & 0.6757       & 0.7667        & 0.6650       & 0.8987        &  0.7548                 &  55.3   &  61.6     &  72.7     &  54.8       & 61.0 \\
GemNet-OC & --                & 18              & -        & -        & -       & -        & \textbf{0.2860}    & --    & --      & --      & --       & \textbf{25.7} \\
FAENet    & 75min             & 623             & 0.3960   & 0.4548   & 0.4295  & 0.5766   & \underline{0.4643} & 51.1 & 58.9   & 50.7   & 69.4    & \underline{57.5} \\
\bottomrule
\end{tabular}}
\end{center}
\end{table*}


\section{Empirical Evaluation of Model Properties}
\label{app:sec:empirical-eval}

In this section, we verify the correctness of our symmetry preserving framework, including all different theoretical properties.

\subsection{Methods and Metrics}
\label{app:sec:methods-metrics}

For this experiment, we look at different methods: (1) FAENet combined with full frame averaging (Full-FA); (2) FAENet with Stochastic Frame Averaging (SFA); (3) FAENet with Stochastic Frame Averaging for the SE(3) group instead of E(3), i.e. less frames to sample from; (4) FAENet with data augmentation (DA); (5) FAENet alone (No-FA); (6) untrained FAENet (No-Train); (7) ForceNet alone (8) SchNet alone. For each method, we study some targeted equivariant and invariant properties via the following metrics:
\begin{itemize}
    \item \textit{Pos}. To check the correctness of our implementation of frame averaging, we let \textit{Pos} be a binary variable which takes the value 1 if, for arbitrary datapoints $D_1, D_2$ such that $D_2 = \rho_1^{(-1)}(g)(D_1)$ for any $g\in E(3)$, $\mathfrak{C}_1=\mathfrak{C}_2$, i.e. they lead to the same representation in the projected space; and 0 otherwise. By definition, it shall be the case for \textit{Full FA} and not for the others. For methods which don't use FA, we set $pos=0$ if $D_1$ and $D_2$ are mapped to the same representation. 
    \item \textit{Rot-I} and \textit{Refl-I}. We measure the rotation invariance \textit{Rot-I} and reflection invariance \textit{Refl-I} property of FAENet by computing the difference in GNN prediction between every samples $D_1$ (of the ID val split) and $D_2$ defined as a $\text{SO}(3)$ transformation of $D_1$, as above. We also compute the percentage variation in prediction with respect to energy (when rotated), with \textit{\%-diff}: $100 \times \frac{\textit{Rot-I}}{y_{true}}$. 
    \item \textit{F-Rot-E} and \textit{F-Refl-E}. In a similar fashion, we evaluate force predictions's rotation and reflection equivariance to $\text{E}(3)$ transformations, denoted respectively \textit{F-Rot-E} and \textit{F-Refl-E}. 
\end{itemize}

\subsection{Results}

For the IS2RE and S2EF datasets, 
we display the obtained results in Table~\ref{table:app:prop-check-is2re} and in Table ~\ref{table:app:prop-check-s2ef}. From them, we draw the following conclusions:
\begin{itemize}
    \item \textbf{Full FA} yields near-perfect invariant/equivariant model predictions, proving the correctness of our implementation. Note that metrics are not exactly $0$ for Full FA (and Schnet) due to software precision limitations.
    \item \textbf{Stochastic Frame Averaging (SFA)} produces a great approximation to equivariant/invariant predictions: \textit{Rot-I}, \textit{F-Rot-E}, \textit{Refl-I} and \textit{F-Refl-E} get close to 0 (the scale is in meV, not eV), \textit{\%-diff} is below $2\%$. Invariant metrics on IS2RE are divided by a factor of $\approx8$ for SFA compared to No-FA or ForceNet, showing that our approach learns efficiently to enforce symmetries. In different terms, energy predictions are much closer for two rotated (or reflected) versions of the same graph when using SFA compared to DA or No-FA. Besides, SFA is our best performing model. When comparing to Full FA (or even other equivariant/invariant models \cref{tb:result_oc20_is2re_full}), SFA demonstrates that perfectly enforcing symmetries does not necessarily lead to better performance. Presenting a setting that makes it easy for the model to learn to accomplish the required task seems essential too. 
    \item \textbf{$\text{SE}(3)$-SFA} approximation yields slightly better metrics because FAENet has to learn data symmetries from less frames (cf SFA), which is easier. By construction, it is less invariant/equivariant to reflections. However, this does not seem to impact performance for this application.
    \item \textbf{Data augmentation (DA)} achieves worse symmetry metrics and MAE score on both datasets, with similar run time (the PCA computation is negligible for FA). Equivariance seems harder to learn than invariance for DA, compared to SFA. For instance, on S2EF, rotation equivariance for DA decreases by $60\%$ compared to SE(3)-SFA while performance drops by $9\%$. We can expect even bigger differences on other datasets since the $z$-axis is fixed in OC20: the adsorbate is always on top of the catalyst. This means that we only look at 2D rotations/reflections, making it easier to learn symmetries via data augmentation. SFA could be (even) more useful on ``pure'' 3D cases. This being said, DA gives very satisfying results on this dataset, confirming that enforcing symmetries implicitly via the data is an attractive (and under-exploited) approach in materials modeling. 
    \item \textbf{No-FA}, i.e. not using any kind of symmetry preservation techniques, does not lead to invariant/equivariant predictions (e.g. invariance metrics are roughly 8 times higher than for SFA) although these metrics improve significantly compared to  \textbf{No-Train} on S2EF. This means that FAENet learn symmetry to some extent, even when not directly encouraged to do so. No-FA showcases a ``significant'' decrease in performance (569 to 595, 464 to 522), proving that enforcing symmetries to some extent still seem desirable. Again, equivariance seems harder than invariance to learn for non-symmetry preserving methods. Again, we have 2D symmetries and expect this gap to be higher on QM9/QM7-X.
    \item Repeating these experiments on \textbf{ForceNet} for IS2RE confirms the trend observed for FAENet: SFA is more efficient than DA, both in terms of performance and in terms of enforcing equivariance/invariance. It also reveals that the design of FAENet is superior, probably because it leverages more efficiently geometric information. 
\end{itemize}

\begin{table}[h!]
\centering
\caption{Test of FAENet's theoretical properties on IS2RE depending on the different symmetry-preservation techniques used. We report several metrics (see \ref{app:sec:methods-metrics}) measuring perfect symmetry-preservation (Pos), rotation invariance (Rot-I), reflection invariance (Refl-I), percentage variation in prediction when rotated (\%-diff) and energy average validation MAE. Everything (but Pos) is given in meV.
Note that we ideally want to minimise these metrics.}
\label{table:app:prop-check-is2re}
\begin{tabular}{lccccc}
\toprule
\textit{Model/Metrics} & Pos & Rot-I & \%-diff & Refl-I & MAE \\
\midrule
Full FA     & 0  & 0.07      & 0.26  & 0.05   & 578    \\
SFA  & 1  & 6.57   & 2.12   & 6.54      & 569    \\ 
SE(3)-SFA    & 1  & 6.35  & 1.87    & 8.90         & 567  \\ 
DA          & 1  & 7.41  & 2.97    & 8.22         & 582   \\
No-Train & 1  & 21.8  & 6.03  & 24.9         & 595 \\
No-FA       & 1  & 51.1  & 18.3  & 55.3         & 595 \\ 
SchNet      & 0  & 0.00 & 0.00     & 0.00       & 684 \\
ForceNet      & 1  & 61.6 & 14.8     & 63.0       & 655 \\
Forcenet-DA      & 1  & 18.9  & 7.47     & 20.8   & 683 \\
Forcenet-SFA & 1 & 7.11 & 2.59 & 7.14 & 638 \\
\toprule
\end{tabular}
\end{table}

\begin{table*}[h!]
\centering
\caption{Test FAENet's theoretical properties on S2EF depending on the different symmetry-preservation methods used. We report several metrics (see \ref{app:sec:methods-metrics}) measuring implementation correctness (\textit{Pos}), energy rotation invariance (\textit{Rot-I}), energy reflection invariance (\textit{Refl-I}), force reflection equivariance (\textit{F-Refl-E}), force rotation equivariance (\textit{F-Rot-E}) and energy MAE. Everything is in meV.
}
\label{table:app:prop-check-s2ef}
\begin{tabular}{lccccccc}
\toprule
\textit{Model/Metrics} & Pos & Rot-I & Refl-I  & F-Rot-E  & F-Refl-E  & MAE \\ 
\midrule
Full FA        & 0  & 0.05          & 0.03           & 0.07    & 0.05  & 489    \\ 
Stocha. FA     & 1  & 6.94      & 6.96      & 4.79   & 4.78   & 464    \\ 
SE(3)-SFA       & 1  & 6.23      & 8.86      & 3.78  & 5.25   & 478  \\  
DA            & 1  & 8.88      & 9.12         & 6.12  & 6.46    & 504   \\
No-FA        & 1  & 26.81     & 27.61        & 10.27  & 10.12  & 522 \\  
No-train       & 1 & 86.03  & 87.09     & 32.31       & 31.92  & -- \\ 
SchNet        & 0  & 0.00     & 0.00              & 0.00   & 0.00   & 918 \\
\toprule
\end{tabular}
\end{table*}
\label{sec:Appendix}


\end{document}